\newcommand{\mset}[1]{\left\{\kern-.5em\left\{ #1 \right\}\kern-.5em\right\}}
\newcommand{\mmset}[1]{\{\kern-.4em\{ #1 \}\kern-.4em\}}
\newcommand{\BV}{\mathrm{BV}} 
\newcommand\eqdef{\mathrel{\overset{\makebox[0pt]{\mbox{\normalfont\tiny\sffamily def}}}{=}}}
\newcommand{\divv}{\mathrm{div}} 
\newcommand{\norm}[1]{\left\Vert#1\right\Vert}
\newcommand{\abs}[1]{\left\vert#1\right\vert}
\newcommand{\set}[1]{\left\{#1\right\}}
\newcommand{\parr}[1]{\left (#1\right )}
\newcommand{\brac}[1]{\left [#1\right ]}
\newcommand{\ip}[1]{\left \langle #1 \right \rangle }
\newcommand{\Real}{\mathbb R}
\newcommand{\eps}{\varepsilon}
\newcommand{\too}{\rightarrow}
\newcommand{\dtoo}{\,{\scriptstyle{\downarrow}}\,}
\newcommand{\per}{\mathrm{per}}
\newcommand{\one}{\mathbf{1}}
\newcommand{\eg}{{e.g.}}
\newcommand{\ie}{{i.e.}}
\newtheorem*{rep@theorem}{\rep@title}
\newcommand{\newreptheorem}[2]{%
\newenvironment{rep#1}[1]{%
 \def\rep@title{#2 \ref{##1}}%
 \begin{rep@theorem}}%
 {\end{rep@theorem}}}
\newtheorem{theorem}{Theorem}
\newtheorem{lemma}{Lemma}
\def\eqref#1{equation~\ref{#1}}
\def\Eqref#1{Equation~\ref{#1}}
\def\1{\bm{1}}
\def\eps{{\epsilon}}
\def\vb{{\bm{b}}}
\def\ve{{\bm{e}}}
\def\vn{{\bm{n}}}
\def\vv{{\bm{v}}}
\def\vx{{\bm{x}}}
\def\vy{{\bm{y}}}
\def\vz{{\bm{z}}}
\def\vec1{{\bm{1}}}
\def\mW{{\bm{W}}}
\DeclareMathAlphabet{\mathsfit}{\encodingdefault}{\sfdefault}{m}{sl}
\SetMathAlphabet{\mathsfit}{bold}{\encodingdefault}{\sfdefault}{bx}{n}
\def\gA{{\mathcal{A}}}
\def\gE{{\mathcal{E}}}
\def\gF{{\mathcal{F}}}
\def\gH{{\mathcal{H}}}
\def\gI{{\mathcal{I}}}
\def\gL{{\mathcal{L}}}
\def\gN{{\mathcal{N}}}
\def\gO{{\mathcal{O}}}
\def\gS{{\mathcal{S}}}
\def\gU{{\mathcal{U}}}
\def\gX{{\mathcal{X}}}
\def\gY{{\mathcal{Y}}}
\newcommand{\E}{\mathbb{E}}
\DeclareMathOperator{\sign}{sign}
\icmltitlerunning{Phase Transitions, Distance Functions, and Implicit Neural Representations}
\begin{document}

\twocolumn[
\icmltitle{Phase Transitions, Distance Functions, and Implicit Neural Representations}




\begin{icmlauthorlist}
\icmlauthor{Yaron Lipman}{fb,wis}
\end{icmlauthorlist}

\icmlaffiliation{fb}{Facebook AI Research}
\icmlaffiliation{wis}{Weizmann Institute of Science}

\icmlcorrespondingauthor{Yaron Lipman}{ylipman@fb.com, yaron.lipman@weizmann.ac.il}

\icmlkeywords{Machine Learning, ICML}

\vskip 0.3in
]



\printAffiliationsAndNotice{}  

\begin{abstract}
Representing surfaces as zero level sets of neural networks recently emerged as a powerful modeling paradigm, named Implicit Neural Representations (INRs), serving numerous downstream applications in geometric deep learning and 3D vision. Training INRs previously required choosing between occupancy and distance function representation and different losses with unknown limit behavior and/or bias. In this paper we draw inspiration from the theory of phase transitions of fluids and suggest a loss for training INRs that learns a density function that converges to a proper occupancy function, while its log transform converges to a distance function. Furthermore, we analyze the limit minimizer of this loss showing it satisfies the reconstruction constraints and has minimal surface perimeter, a desirable inductive bias for surface reconstruction. Training INRs with this new loss leads to state-of-the-art reconstructions on a standard benchmark.  


\end{abstract}

\section{Introduction}
\emph{Implicit neural representation} (INR) refers to representing a surface in $\Real^3$, or more generally a hyper-surface in $\Real^d$, as a zero level-set of a neural network $f$ with parameters $\theta$:
\begin{equation}\label{e:S}
    \gS_\theta = \set{\vx\in\Real^d \ \vert \ f(\vx;\theta)=0}.
\end{equation}
Recent works have demonstrated the benefits of this representation \cite{mescheder2019occupancy,chen2019learning,Park_2019_CVPR,atzmon2019controlling}, including: the flexibility and expressive power of neural networks $f$, their favorable inductive bias / implicit regularization properties, and the facilitation of downstream learning applications involving 3D shapes. Indeed, implicit neural representations find more and more applications in machine learning, computer vision, and computer graphics \cite{saito2019pifu,niemeyer2019differentiable,niemeyer2019occupancy,gropp2020implicit,jiang2020local,sitzmann2020implicit,yariv2020multiview,tancik2020fourfeat}. 



\begin{figure}
    \centering
    \begin{tabular}{@{\hskip0pt}c@{\hskip1pt}c@{\hskip0pt}}
         \includegraphics[width=0.45\columnwidth]{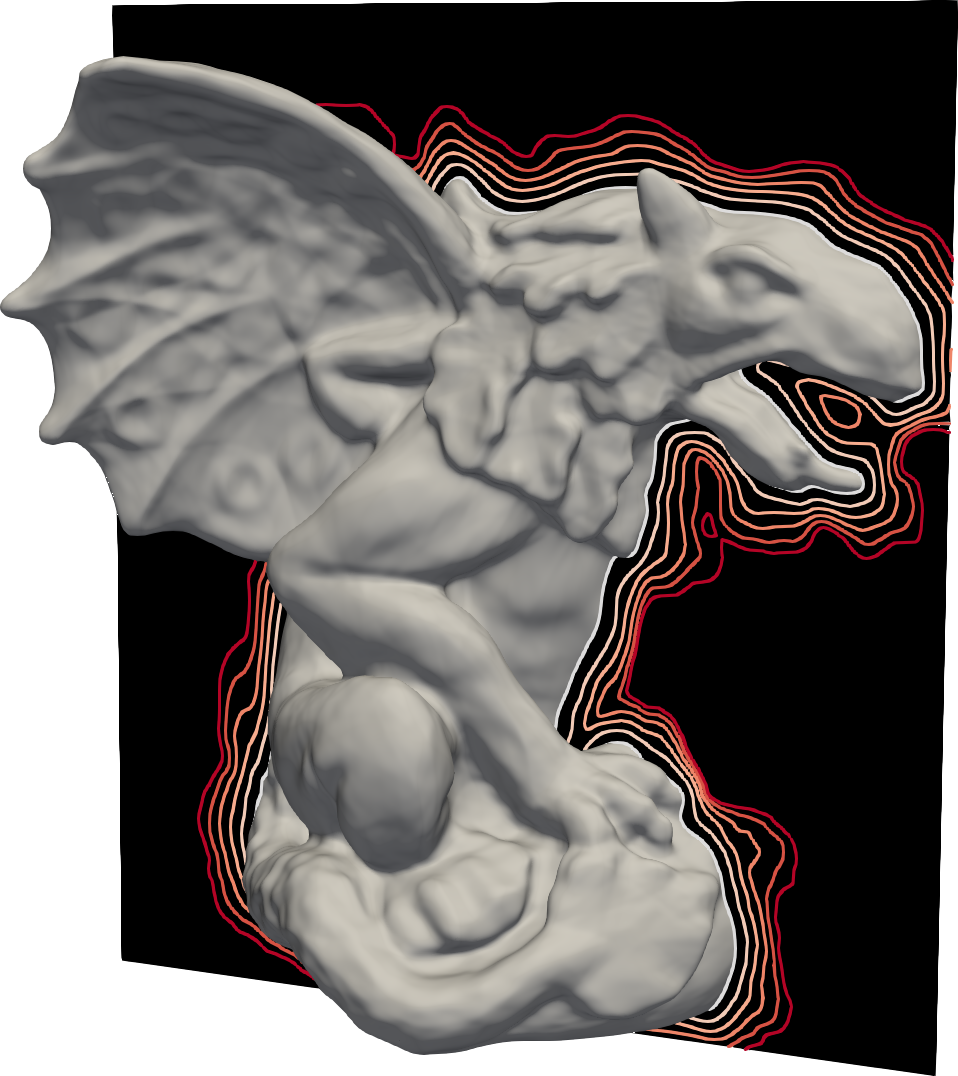}
    &  \includegraphics[width=0.55\columnwidth]{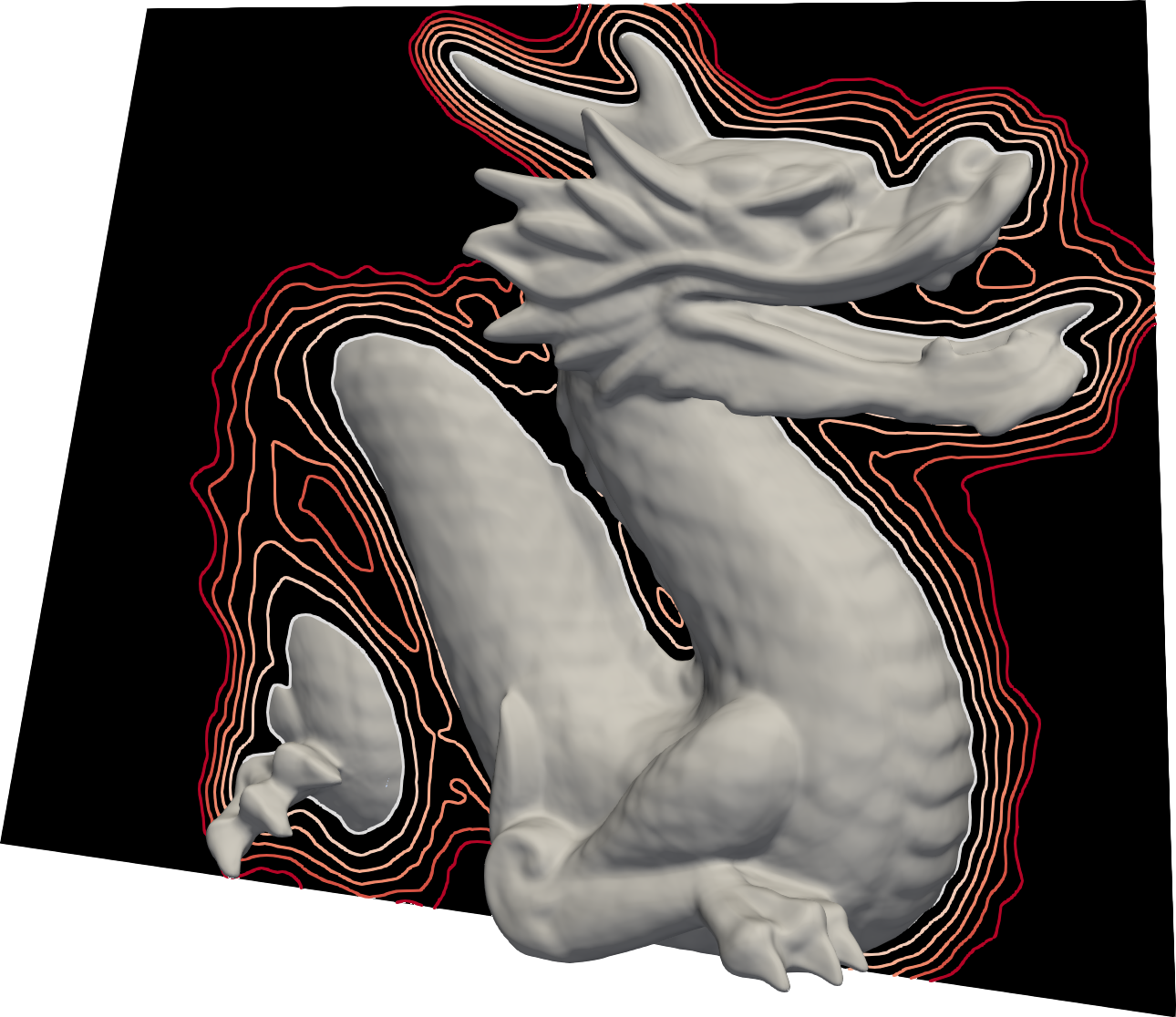}
    \end{tabular}
    \caption{Implicit neural representations learned with the novel PHASE loss. This new loss unifies occupancy and signed distance representations, and minimizes the perimeter of the  reconstructed surface. }
    \label{fig:teaser}
\end{figure}

Still, the problem of \emph{training} an implicit neural representation from raw data $\gX\subset \Real^d$ such as a point-cloud or a \emph{triangle soup}\footnote{A \emph{triangle soup} is a collection of triangles, not necessarily consistently oriented or defining a 2-manifold.} is not fully solved. 
%


Some previous works reduce the problem to a supervised regression problem \cite{Park_2019_CVPR,mescheder2019occupancy,chen2019learning,duan2020curriculum,chabra2020deep}. That is, compute an implicit representation to the geometry $\hat{f}$, and then, optimize the network $f$ to approximate $\hat{f}$. Other works consider the unsupervised version where only the raw input geometry is provided as input $\gX\subset \Real^d$ \cite{atzmon2019sal,gropp2020implicit,sitzmann2020implicit,tancik2020fourfeat,williams2020neural}. Two limitations of previous approaches are: First, the limit behavior and/or inductive bias of their loss/representation is often not clear, sometimes leading to undesired artifacts in the reconstructed geometry. Second, they require choosing between different implicit representations such as Signed Distance Functions (SDF) or occupancy functions. 

The goal of this paper is to design a new loss for training INRs from raw data, unifying the occupancy and SDF representations. This new loss, called PHASE, introduces a desirable inductive bias, and its limit behavior is theoretically understood. Motivated by phase transition theory of fluids we design a one parameter ($\eps$) loss that learns a signed density $u$, that converges (as $\eps\dtoo 0$) to a proper occupancy function, where its $\log$-transform, $w$, is a viscous (\ie, smoothed) signed distance function. Furthermore, we prove that the zero level-sets of the minimizers of this loss converge to a limit surface $\gS$ that is guaranteed to pass in vicinity of the input data $\gX$, and has minimal surface perimeter. These properties serve as a good inductive bias for the surface reconstruction problem, and we show that in practice this loss achieves superior reconstructions compared to relevant baseline methods. Furthermore, its minimal perimeter property makes it more suitable for high frequency INR training methods, such as \cite{tancik2020fourfeat,sitzmann2020implicit}, considerably reducing the ghost artifacts created when training with less structured losses. Figure \ref{fig:teaser} shows an example of two INR surfaces learned with this novel loss (PHASE).

To summarize our contributions, we introduce a novel loss for training INRs directly from input raw data that is supported by a well established limit theory and state-of-the-art reconstruction results. In the limit, the minimizers $u_0$ of the PHASE loss satisfy the following desirable properties: \emph{(i)} \emph{Proper occupancy}: $u_0\in \set{-1,1}$ almost everywhere, and $u_0= 0$ is a finite perimeter boundary surface $\gS$; \emph(ii) \emph{Zero reconstruction error}: The surface $\gS$ passes near the input data $\gX$; \emph{(iii)} \emph{Minimal perimeter}: $\gS$ minimizes surface perimeter; and \emph{(iv)} \emph{Signed distance function}: The log transform of $u_0$ is a smoothed signed distance function to $\gS$.

\section{Preliminaries}\label{s:prelims}
In this section we provide some preliminaries and background for the main constructions and proofs in the next section. We work in $\Real^d$, and $\norm{\vx}$ will denote the standard euclidean norm of a vector $\vx\in\Real^d$. A subset $\Omega\subset \Real^d$ is called a \emph{Lipschitz domain} if it is open, bounded, connected, and has a boundary that is the union of finitely many Lipschitz manifolds; $|\Omega|=\int_\Omega 1$ is the total volume of $\Omega$. $L^p(\Omega)$ is the space of scalar functions, $f:\Omega\too\Real$, with finite Lebesgue integral $\int_\Omega |f|^p < \infty$; and $C^\infty_c(\Omega)$ denotes the space of infinitely smooth functions with a compact support in $\Omega$. $L^p(\Omega;\Real^d)$, $C^\infty_c(\Omega;\Real^d)$ will denote vector valued functions $\vv:\Omega\too\Real^d$ with each coordinate in $L^{p}(\Omega)$, $C^\infty_c(\Omega)$, respectively. 

\subsection{Sobolev spaces}\label{ss:sobolev}
Sobolev spaces generalize the classical notion of differentiable functions to \emph{weakly} differentiable functions. They are useful in the field of partial differential equations and calculus of variations since, in contrast to classical differentiable functions, they survive the process of limit. $W^{1,p}(\Omega)$, $p\in [1,\infty]$, is the Sobolev space of all $L^p(\Omega)$ functions with first \emph{weak derivatives} also in $L^p(\Omega)$ (definition of weak derivatives and more details on Sobolev spaces can be found in \cite{adams2003sobolev,gilbarg2015elliptic,leoni2017first}). One interesting relation to neural networks is the following lemma proved in the supplementary:
\begin{lemma}\label{lem:W1}
Neural networks with ReLU activations are in $W^{1,p}(\Omega)$, for all $p\in [1,\infty]$, if $\Omega\subset\Real^d$ is a Lipschitz domain. 
\end{lemma}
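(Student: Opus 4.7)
The plan is to show that a ReLU network $f$ is Lipschitz continuous on $\Omega$, which, combined with boundedness of $\Omega$ and the equivalence between $W^{1,\infty}$ and Lipschitz functions on Lipschitz domains, gives the result.

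First I would observe that $f$ is a composition of affine maps and the scalar ReLU activation $\sigma(t)=\max(t,0)$. Each affine map $\vx\mapsto A\vx+\vb$ is Lipschitz with constant $\norm{A}$ (operator norm), and $\sigma$ is $1$-Lipschitz. Since compositions and Cartesian products of Lipschitz functions are Lipschitz, $f:\Real^d\to\Real$ is globally Lipschitz, with an explicit bound given by the product of the spectral norms of the weight matrices. In particular $f$ is bounded on the bounded set $\Omega$, hence $f\in L^\infty(\Omega)$.

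Next I would invoke Rademacher's theorem: any Lipschitz function is differentiable almost everywhere, and the essential supremum of $\norm{\nabla f}$ is bounded by the Lipschitz constant. Therefore the classical (a.e.) gradient $\nabla f$ lies in $L^\infty(\Omega;\Real^d)$. To conclude that this classical a.e. gradient is in fact the \emph{weak} gradient, I would appeal to the standard characterization $W^{1,\infty}(\Omega)=\mathrm{Lip}(\bar\Omega)$ valid on Lipschitz domains (see, e.g., \cite{leoni2017first}); concretely, one can check the distributional identity $\int_\Omega f\,\partial_i\varphi\,d\vx=-\int_\Omega \partial_i f\,\varphi\,d\vx$ for $\varphi\in C^\infty_c(\Omega)$ by approximating $f$ via mollification, using the fact that mollifiers of Lipschitz functions converge in $W^{1,p}_{\mathrm{loc}}$ together with their gradients.

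Finally, since $\Omega$ is bounded, $L^\infty(\Omega)\hookrightarrow L^p(\Omega)$ for every $p\in[1,\infty]$, so $f\in L^p(\Omega)$ and $\nabla f\in L^p(\Omega;\Real^d)$, giving $f\in W^{1,p}(\Omega)$ for all $p\in[1,\infty]$, as claimed. The only subtle step is the identification of the weak gradient with the a.e. classical gradient, and that subtlety is precisely what is handled by the Lipschitz-domain hypothesis on $\Omega$, which ensures the required extension/mollification arguments go through without boundary issues.
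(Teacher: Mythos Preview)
Your argument is correct but follows a genuinely different route from the paper. The paper exploits the explicit piecewise-linear structure of a ReLU network: it decomposes $\Omega=\bigcup_k\Omega_k$ into the polytopes on which $f$ is affine, proposes the piecewise-constant function $g=\partial L_k/\partial x_i$ on each $\Omega_k$ as the weak derivative, and verifies the defining identity $\int_\Omega f\,\partial_i\psi=-\int_\Omega g\,\psi$ by integrating by parts on each piece and observing that the interior boundary contributions cancel (continuity of $f$ plus opposite outward normals) while $\psi\in C_c^\infty(\Omega)$ kills the exterior ones. Your approach instead treats $f$ as a generic Lipschitz function, invokes Rademacher and the identification $\mathrm{Lip}=W^{1,\infty}$, and then embeds into $W^{1,p}$ by boundedness of $\Omega$. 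Your route is shorter and immediately generalizes to any Lipschitz activation, while the paper's route is more elementary (no Rademacher, no $W^{1,\infty}$ characterization) and yields an explicit formula for the weak gradient. One minor remark: the direction you actually need, Lipschitz $\Rightarrow W^{1,\infty}$, does not require the Lipschitz-domain hypothesis on $\Omega$; the mollification/absolute-continuity-on-lines argument is purely local and works on any open set, so your attribution of that step to the boundary regularity is slightly misplaced (the hypothesis is genuinely needed only for the converse direction).
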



\subsection{Functions of bounded variation and perimeter} \label{ss:BV_perimeter}
The space of bounded variation functions over $\Omega$, denoted $\BV(\Omega)$, is defined as the collection of all functions $f\in L^1(\Omega)$ such that 
\begin{equation}\label{e:def_BV}
    \int_\Omega \norm{\nabla f} \eqdef \sup_{\substack{\vv\in C^\infty_c(\Omega;\Real^d)\\ \norm{\vv}\leq 1}}\int_{\Omega} f \divv(\vv) < \infty
\end{equation}
where $\norm{\vv}\leq 1$ means that for all $\vx\in\Omega$, $\norm{\vv(\vx)}\leq 1$, and $\divv$ is the divergence operator. $\BV(\Omega;\gA)$ is the set of bounded variation functions over $\Omega$ with image in the set $\gA$.

\emph{Surface perimeter} is a generalization of surface area to boundaries of general sets: Given a subset $\gI\subset \Omega$, its perimeter in $\Omega$ is defined by 
\begin{equation}\label{e:per}
    \per_\Omega(\gI) \eqdef \int_\Omega \norm{\nabla \one_{_\gI}}
\end{equation}
where $\one_{_\gI}$ is the indicator function of the set $\gI$. If $\gI$ has a smooth boundary then it can be shown with the help of the Divergence Theorem that $\per_\Omega(\gI)$ coincides with the standard area of $\partial\gI \cap \Omega$, \ie, $\gH^{d-1}(\partial\gI\cap\Omega)$, the $d-1$ dimensional Hausdorff measure \cite{rindler2018calculus}. 

\subsection{$\Gamma$-convergence}\label{ss:gamma_convergence}
$\Gamma$-convergence is a tool for analyzing the limit behavor of a family of functionals. In our case we consider such a family, parameterized by $\eps$: $\gF_\eps:L^1(\Omega)\too\Real$, and a ``limit'' functional $\gF_0:L^1(\Omega)\too\Real$. We say that $\gF_\eps$ $\Gamma$-converge to $\gF_0$ (as $\eps\dtoo 0$) in the $L^1(\Omega)$ norm, if the following two conditions hold:
\begin{enumerate}
    \item \emph{$\liminf$}. For all sequences $u_\eps\too u$, $$\gF_0(u)\leq \liminf_{\eps\dtoo 0}\gF_\eps(u_\eps).$$
    \item \emph{Recovery sequence}. For every $v\in L^1(\Omega)$ there exists a sequence $v_\eps\too v$ so that $\lim_{\eps \dtoo 0}\gF_\eps(v_\eps)=\gF_0(v)$. 
\end{enumerate}
The convergence of $u_\eps, v_\eps$ in the above conditions is in the $L^1(\Omega)$ sense. The most important property of $\Gamma$-convergence is that it implies convergence of minimizers and minimal values \cite{modica1987gradient,rindler2018calculus}:
\begin{theorem}\label{thm:gamma_conv_of_mins}
If $u_\eps\too u_0$, where $u_\eps$ is a minimizer of $\gF_\eps$, then $\Gamma$-convergence $\gF_\eps\too\gF_0$ would imply $u_0$ is a minimizer of $\gF_0$ and $\lim_{\eps\dtoo 0}\gF_\eps(u_\eps)=\gF_0(u_0)$.   
\end{theorem}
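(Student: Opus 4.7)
The plan is to derive both conclusions — that $u_0$ minimizes $\gF_0$ and that $\gF_\eps(u_\eps) \to \gF_0(u_0)$ — directly from the two defining properties of $\Gamma$-convergence (liminf and recovery sequence), together with the minimality of each $u_\eps$. No tools beyond these three ingredients should be needed; the argument is a standard sandwich between $\liminf$ and $\limsup$.

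First I would apply the $\liminf$ property to the given sequence $u_\eps \too u_0$ to obtain the lower bound $\gF_0(u_0) \leq \liminf_{\eps \dtoo 0} \gF_\eps(u_\eps)$. This is the ``free'' half of the argument and uses no information about minimality.

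For the matching upper bound, I would fix an arbitrary competitor $v \in L^1(\Omega)$ and invoke the recovery-sequence property to produce $v_\eps \too v$ with $\gF_\eps(v_\eps) \too \gF_0(v)$. Since $u_\eps$ minimizes $\gF_\eps$, we have $\gF_\eps(u_\eps) \leq \gF_\eps(v_\eps)$ for every $\eps$. Taking $\limsup$ on both sides then yields
\begin{equation*}
\limsup_{\eps \dtoo 0} \gF_\eps(u_\eps) \leq \lim_{\eps \dtoo 0} \gF_\eps(v_\eps) = \gF_0(v).
\end{equation*}
Chaining this with the lower bound from the previous step produces $\gF_0(u_0) \leq \gF_0(v)$ for every $v \in L^1(\Omega)$, which is exactly the statement that $u_0$ is a minimizer of $\gF_0$.

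To finish, I would specialize $v = u_0$ in the chain above. This gives $\gF_0(u_0) \leq \liminf \gF_\eps(u_\eps) \leq \limsup \gF_\eps(u_\eps) \leq \gF_0(u_0)$, forcing all inequalities to be equalities and in particular collapsing $\liminf$ and $\limsup$ to a common limit equal to $\gF_0(u_0)$. There is no real obstacle here: the statement is essentially the ``fundamental theorem'' of $\Gamma$-convergence, and the only minor subtlety is that the theorem as stated already assumes convergence of the minimizing sequence $u_\eps \too u_0$ (so we do not need compactness/coercivity arguments to produce a limit), and that the recovery sequence provides a genuine limit rather than merely a $\limsup$, so the inequality $\gF_\eps(u_\eps) \leq \gF_\eps(v_\eps)$ passes cleanly to the limit.
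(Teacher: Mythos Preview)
Your argument is correct and is exactly the standard proof of this result. Note, however, that the paper does not actually prove this theorem: it is stated as a known property of $\Gamma$-convergence with citations to \cite{modica1987gradient,rindler2018calculus}, so there is no ``paper's own proof'' to compare against. Your sandwich argument via the $\liminf$ inequality and a recovery sequence for an arbitrary competitor $v$ (then specializing to $v=u_0$) is precisely the textbook derivation found in those references.
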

Furthermore, if the functional family $\gF$ is also \emph{coercive}, meaning $\gF_\eps\leq c <\infty$, for all $\eps>0$, is compact, the convergence (up-to a subsequence) of $u_\eps$ is established.

\paragraph{Local minimizers.} While $\Gamma$-convergence deals with convergence of global minimizers, in case the limit functional $\gF_0$ has a strict local minimum $u_0$, the recovery sequence property shows that there exist a sequence of asymptotically local minimizers of $\gF_\eps$ that converge to $u_0$. In fact, there exists a recovery sequence of (perfect) local minimizers of $\gF_\eps$ converging to $u_0$, see Theorem 4.1 in \cite{kohn1989local}.

\begin{figure}[t]
    \centering
    \begin{tabular}{cc}
        \includegraphics[width=0.4\columnwidth]{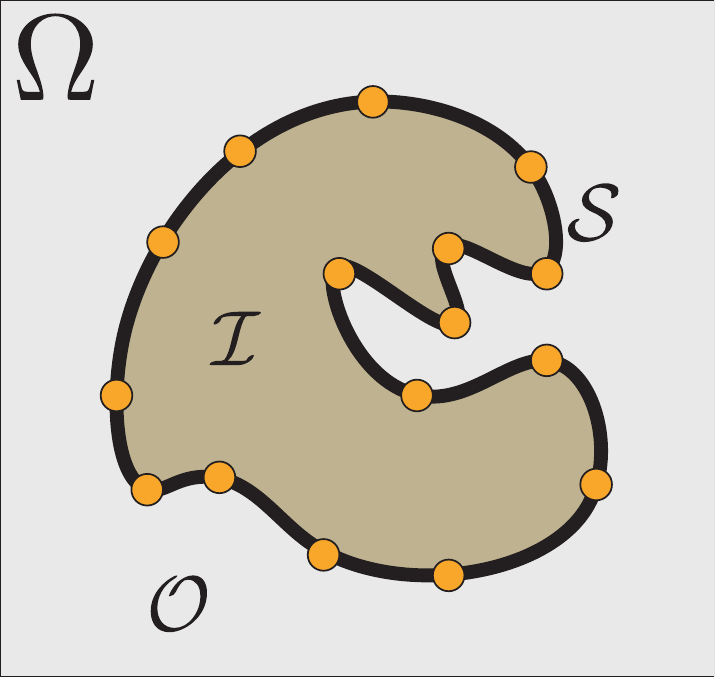} & \includegraphics[width=0.4\columnwidth]{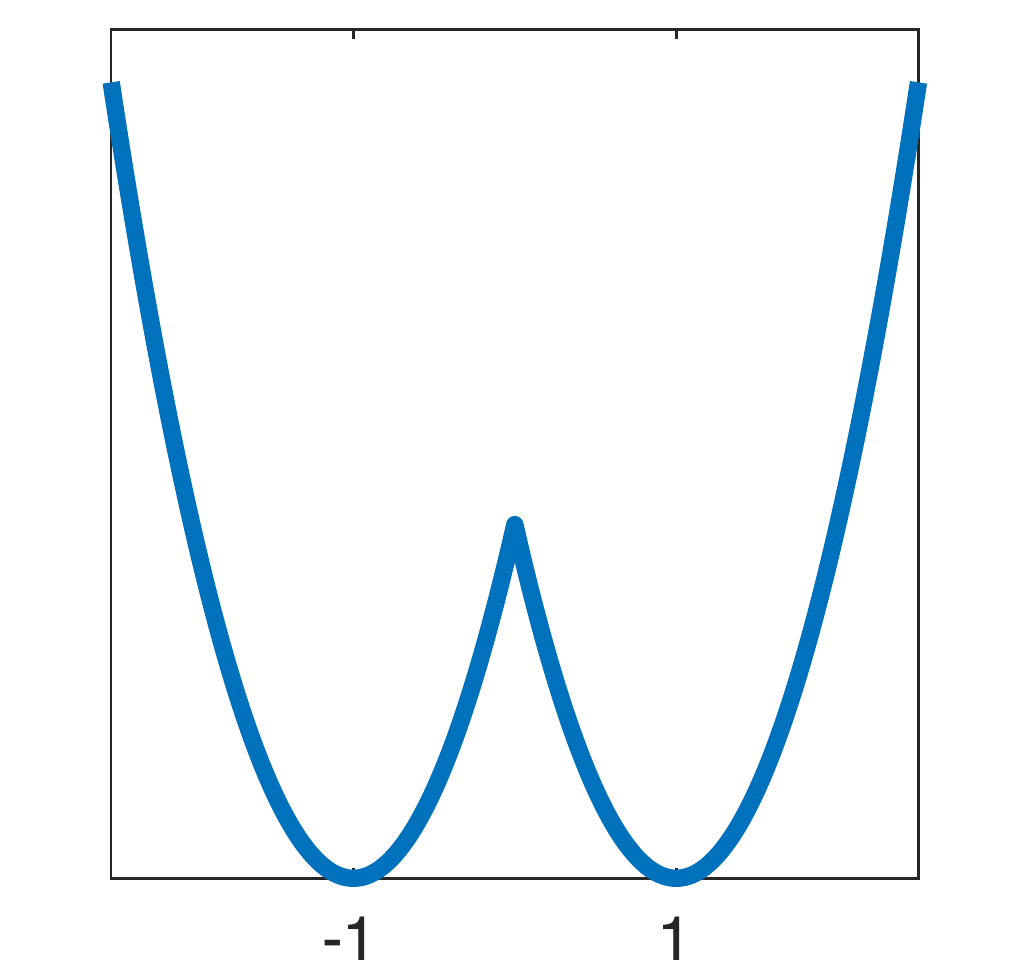}  \\
        (a) & (b) 
    \end{tabular}
    \caption{(a) The surface reconstruction problem asks to decompose a container ($\Omega$) to interior ($\gI$) and exterior ($\gO$) with $\gX$ (centers of orange disks) being close to the interface $\gS$ (in black); (b) a visualization of a double-well potential $W$. }
    \label{fig:setup}
\end{figure}

\section{Method}
We are considering the problem of learning an implicit neural representation (INR), \ie,  \eqref{e:S}, from an input raw geometry $\gX\subset\Omega$, where $\Omega$ is some Lipschitz domain (\eg, think of a bounding box of $\gX$). A prototypical example of raw geometry is a point cloud $\gX=\set{\vx_i}_{i\in I}$, where $\vx_i\in\Real^d$.

We are drawing inspiration from the phase transition problem of fluids  \cite{modica1987gradient,sternberg1988effect,gurtin1989phase,hutchinson2000convergence,rindler2018calculus}, where two fluids find equilibrium in a container ($\Omega$) by minimizing a certain energy. In our case, the two fluids are chosen to represent the \emph{interior} and the \emph{exterior} of the shape. Mathematically, this is modeled with a function $u:\Omega\too [-1,1]$. An $x\in\Omega$ is deemed interior if $u(\vx)=-1$, exterior if $u(\vx)=1$, and $u(\vx)\in(-1,1)$ represents some ``mixture'' of interior and exterior. We will denote by 
\begin{equation}\label{e:I_and_O}
\begin{aligned}
\gI &= \set{\vx\in\Omega\ \vert \ u(\vx)\in[-1,0)}, \\ 
\gO &= \set{\vx\in\Omega\ \vert \ u(\vx)\in(0,1]}, \\
\gS &= \partial \gI \cap \Omega
\end{aligned}
\end{equation} the (rounded) interior of the shape, its exterior, and the boundary surface of the shape, respectively. 
$u$ can be seen as an occupancy function \cite{mescheder2019occupancy,chen2019learning}; note that composing $u$ with a linear transformation taking $[-1,1]$ to $[0,1]$ will provide the more standard (yet equivalent) form of occupancy function with image in $[0,1]$.  Differently from the above mentioned works, our goal here is to learn $u$ directly from the raw data $\gX$. 

In phase transition of fluids one looks for an equilibrium density $u$ minimizing the energy 
\begin{equation}\label{e:int_W}
    \gE(u)=\int_\Omega W(u),
\end{equation}
over all $u\in L^1(\Omega)$ satisfying the "total mass" constraint: 
\begin{equation}\label{e:int_m}
  \int_\Omega u = m \in \big(-|\Omega|,|\Omega|\big).  
\end{equation}
The function $W:\Real\too\Real$ is chosen to be a \emph{double-well potential}, \ie, it is non-negative and has zeros only at $\set{-1,1}$, \eg, Figure \ref{fig:setup}(b). As we will see, with this choice of $W$, the energy $\gE$ will drive (almost) every point $\vx\in\Omega$ to choose a side: $u(\vx)=-1$ (corresponding to the interior) or $u(\vx)=1$ (corresponding to the exterior). The global mass constraint in \eqref{e:int_m} is motivated from the physical property that the fluids maintain their mass in a closed container $\Omega$. This constraint, however, is not suitable for surface reconstruction since we don't know the mass/volume of the shape we are looking for, and enforcing hard global constraints is challenging in deep networks, the main work horse of INRs. 

We therefore start by considering the following (unconstrained) energy functional:
\begin{equation}\label{e:F}
    \gF(u)=\lambda\gL(u) + \begin{cases}
    \int_\Omega W(u)   & u\in L^1(\Omega) \\
    +\infty & \text{otherwise},
    \end{cases}
\end{equation}
where $\lambda>0$ is a parameter, and $\gL$ is the \emph{reconstruction loss}, responsible for encouraging $u$ to vanish in vicinity of $\gX$: 
\begin{equation}\label{e:L}
   \gL(u) =  \E_{\vx\sim \gX} \abs{\frac{1}{|B_\vx|}\int_{B_\vx}u},  
\end{equation} 
where $B_\vx$ is some choice of a small radius ball centered at $\vx$, and $\vx\sim \gX$ means that $\vx$ is sampled from some distribution over $\gX$. For example, if $\gX$ is a point cloud, a natural choice would be to give every point an equal probability. We note that the following analysis also works for a more general reconstruction error: the integral in \eqref{e:L} can be replaced with $\int_\Omega u \varphi_\vx$, where $\varphi_\vx:\Omega\too \Real_+$ is an integrable and essentially bounded in $\vx,\vy$ over $\Omega$.

The energy $\gL$ can be seen as a local and soft version of the total density constraint in \eqref{e:int_m}. It carries an interesting benefit that will be shown later on: in the context of the functional $\gF$ it will force $u$ to change sign in vicinity of $\gX$, rather than just zero out, which is a useful property for INRs. 


It is not hard to see, however, that without further regularization the functional in \eqref{e:F} is too permissive, allowing far too many solutions (\ie, global minima):
\begin{lemma}
Let $\gX=\set{\vx_i}_{i\in I}$ be a (finite) point cloud, and assume the balls $B$ are sufficiently small. Then, almost every arbitrary decomposition of the container $\Omega=\gI\cup\gO$ so that $\gX\subset \partial \gI \cap \Omega$ provides a global minimum to $\gF$.
\end{lemma}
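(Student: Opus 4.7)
The plan is to show that the infimum of $\gF$ equals $0$ and to exhibit an infinite-dimensional family of decompositions that attain it. Observe first that $W\geq 0$ and $\gL\geq 0$, so $\gF(u)\geq 0$ for every $u\in L^1(\Omega)$. Moreover, $\gF(u)=0$ if and only if (\emph{a}) $W(u(\vx))=0$ for almost every $\vx\in\Omega$, which by the double-well assumption forces $u(\vx)\in\{-1,+1\}$ a.e., and (\emph{b}) $\int_{B_\vx} u = 0$ for every $\vx$ in the support of the sampling distribution over $\gX$.

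Given any measurable decomposition $\Omega=\gI\cup\gO$ with $\gX\subset\partial\gI\cap\Omega$, I would set $u\eqdef \one_\gO-\one_\gI$, so condition (\emph{a}) holds automatically. For (\emph{b}), linearity of integration gives
\begin{equation*}
\int_{B_\vx} u \;=\; |B_\vx\cap\gO|-|B_\vx\cap\gI|,
\end{equation*}
which vanishes precisely when $|B_\vx\cap\gI|=|B_\vx\cap\gO|=\tfrac{1}{2}|B_\vx|$. I would then choose the radius of the balls $B_\vx$ small enough that they lie inside $\Omega$ and are pairwise disjoint, and construct $\gI$ inside each $B_\vx$ so that its boundary passes through $\vx$ and splits the ball into two pieces of equal volume — for instance, by letting $\partial\gI\cap B_\vx$ coincide with any hyperplane through $\vx$, or any Lipschitz deformation of such a hyperplane that preserves the volumetric symmetry. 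Outside $\bigcup_{\vx\in\gX}B_\vx$ the sets $\gI$ and $\gO$ can be chosen as any measurable partition with $\gX\subset\partial\gI\cap\Omega$.

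The only real obstacle is the interpretation of ``almost every arbitrary decomposition'': the content of the lemma is that the \emph{only} constraint the loss imposes on the decomposition is the per-point local balance $|B_\vx\cap\gI|=\tfrac12|B_\vx|$, which still leaves an infinite-dimensional family of global minimizers with essentially no geometric restriction on $\partial\gI$ away from the data and only a one-codimensional restriction near each $\vx\in\gX$. Since this construction is essentially one step, no further calculation is needed; the purpose of the lemma is precisely to motivate the regularization introduced in the remainder of the paper.
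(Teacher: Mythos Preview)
Your proposal is correct and follows essentially the same approach as the paper: both arguments note that $\gF\geq 0$, set $u=\one_\gO-\one_\gI$ so that $W(u)=0$ a.e., take the balls small and pairwise disjoint, and then split each $B_{\vx_i}$ by a hyperplane through $\vx_i$ (the paper phrases this as ``$B(\vx_i)\cap\partial\gI$ is exactly half a ball'') to force $\int_{B_{\vx_i}}u=0$, leaving the decomposition outside the balls completely free. Your write-up is slightly more explicit about the ``if and only if'' characterization of the zero set and about the interpretation of ``almost every'', but there is no substantive difference in method.
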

\begin{proof}
Assume the balls $B$ are sufficiently small so $B(\vx_i)$, $i\in I$, are all mutually disjoint. Next, consider an arbitrary decomposition $\Omega=\gI\cup\gO$, where the boundary $\partial\gI$ is smooth, and each $B(\vx_i)\cap \partial \gI$ is exactly half a ball (\ie, congruent to $B\cap \text{half-space of }\Real^d)$. Clearly, there is an infinite number of such decompositions. Lastly, let $u(\vx)=\one_{_\gO}(\vx)-\one_{_\gI}(\vx)$, where as above $\one$ denotes the indicator function of a set. Then, $\gF(u)=0$. 
\end{proof}

\begin{figure}
    \centering
    \includegraphics[width=\columnwidth]{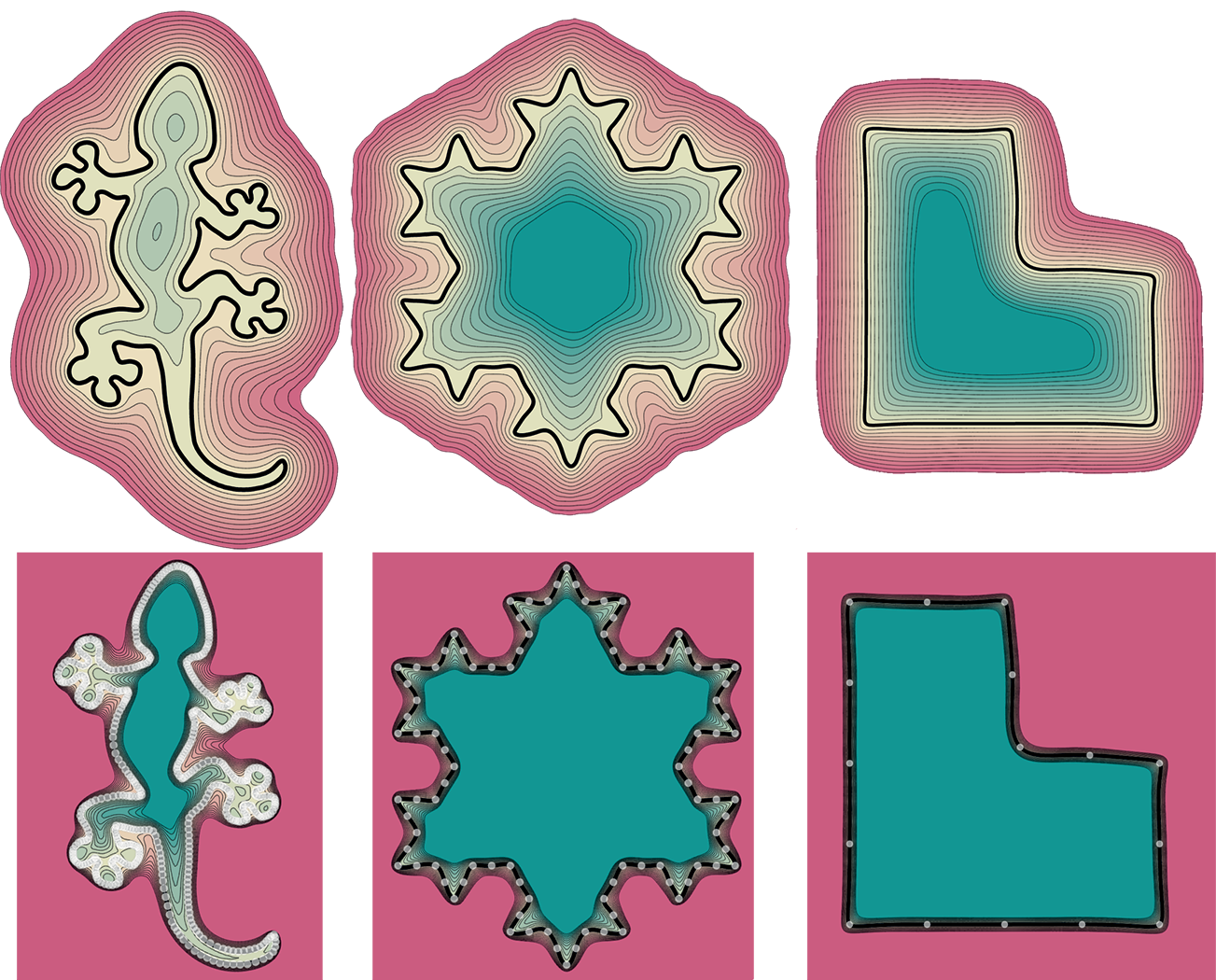}
    \caption{Learned densities $u_\eps$ (bottom row) for input point cloud $\gX$ (depicted as white dots). The log transform of the densities, $w_\eps$, is visualized in the top row, with zero level set in bold. Negative values are colored in green, and positive in red. }
    \label{fig:u_ns_w}
\end{figure}

\subsection{Regularization and minimal perimeter solutions} In phase transition theory the functional in \eqref{e:int_W} is further refined to enforce some regularity on the minimizers $u$ (and consequently on the interface surface $\gS$) by adding a small perturbation of the form $\epsilon\norm{\nabla u}^2$. This is called the Van der Waals-Cahn-Hilliard (WCH) theory of phase
transitions \cite{modica1987gradient,sternberg1988effect}. We adapt it for surface reconstruction: 
\begin{equation}\label{e:F_eps}
\gF_{\epsilon}(u)=\lambda\gL(u)+\begin{cases} \int_\Omega \eps\norm{\nabla u}^2 + W(u) & u\in W^{1,2}(\Omega)\\ 
    +\infty & \text{otherwise}
    \end{cases}
%
\end{equation}
and $\epsilon>0$ is a regularization parameter, and we switched to $W^{1,2}(\Omega)$ Sobolev function space to allow integration of the squared gradient norm; see Section \ref{ss:sobolev} (and reference therein) for a short introduction on Sobolev spaces. Note that we added dependence on $\epsilon$ in the functional notation; in our analysis we will consider $\lambda$ to be a function of $\eps$, \ie,  $\lambda=\lambda(\eps)$. 

As it turns out, the limit behavior of the minimizers of $\gF_\eps$, denoted hereafter $u_\eps$, can be characterized pretty well, \emph{regardless} of the specific choice of $W$: as $\eps\dtoo 0$ they converge to a limit solution $u_0$ with the following three key properties: 
\begin{enumerate}[label=(\emph{\roman*})]
    \item \emph{Proper occupancy}: $u_0\in \BV(\Omega;\set{-1,1})$.
    \item \emph{Zero reconstruction error}:  $\gL(u_0)=0$. 
    \item  \emph{Minimal perimeter}:   $u_0$ minimizes $\per_\Omega(\gI)$. 
\end{enumerate}
Several comments and clarifications are in order. 
First, (i) implies that $u_0$ takes values in $\set{-1,1}$ almost everywhere in $\Omega$, meaning it leaves no ``mixed"/undetermined areas of positive measure; in other words, almost every point is either perfectly inside or outside the shape. Mathematically this means that $u_0$ can be written as 
\begin{equation}\label{e:u_0}
u_0(\vx) = -\one_\gI(\vx) + \one_{\Omega\setminus \gI}(\vx),
\end{equation}
where as above $\one$ is the indicator function of the relevant set, $\gI$ is defined in \eqref{e:I_and_O} with $u=u_0$ (and in this case coincides up to measure zero with $u_0^{-1}(-1)$), and the equality should be interpreted up to a set of measure zero. 
Second, property (ii), $\gL(u_0)=0$, means that $u_0$ satisfies the constraints $\int_{B_\vx}u_0=0$ exactly, and in view of (i) indeed has to change sign in each $B_\vx$. 
Third, since $u_0\in\BV(\Omega;\set{-1,1})$, the set $\gI$ has a bounded surface perimeter in $\Omega$ (see Section \ref{ss:BV_perimeter} and the supplementary). (iii) asserts that the $\gI$ will have minimal \emph{surface perimeter}. 
Lastly, it can be shown that the set $\gF_\eps\leq c<\infty$ is compact in $L^1(\Omega)$ (see \eg, Section 3.1 in \cite{kohn1989local}), therefore convergence of $u_\eps$ is up to a sub-sequence. 

The three statements (i), (ii) and (iii) above are shown using the tool of $\Gamma$-convergence as introduced in Section \ref{s:prelims}. We define our target (limit) functional to be measuring the boundary surface perimeter for solutions that have zero reconstruction loss:
\begin{equation}\label{e:F_0}
    \gF_0(u)=  \begin{cases} \sigma_0 \per_\Omega(\gI) & u\in \BV(\Omega,\set{-1,1}), \\&  \text{and } \gL(u)=0 \\ 
    +\infty & \text{otherwise}
    \end{cases}
\end{equation}
where $\sigma_0$ is a constant, and $\gI$ defined in \eqref{e:I_and_O}. We prove:
\begin{theorem}\label{thm:main_gamma}
    If $\lambda=\lambda(\eps)$ is chosen so that $\lambda\eps^{-\frac{1}{2}}\too \infty$ while $\lambda\eps^{-1/4}\too 0$, then $\eps^{-\frac{1}{2}}\gF_\eps$ $\Gamma$-converge to $\gF_0$.
\end{theorem}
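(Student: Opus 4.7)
The plan is to reduce to classical Modica--Mortola $\Gamma$-convergence and handle the reconstruction term as a perturbation controlled by the two rate conditions on $\lambda(\eps)$. Observe that
$\eps^{-1/2}\gF_\eps(u) \;=\; \lambda\eps^{-1/2}\gL(u) \;+\; \int_\Omega \eps^{1/2}\norm{\nabla u}^2 + \eps^{-1/2}W(u)$;
after the substitution $\delta = \sqrt{\eps}$ the second summand is exactly the classical Van der Waals--Cahn--Hilliard functional with small parameter $\delta\dtoo 0$, whose $\Gamma$-limit in $L^1(\Omega)$ is $\sigma_0\per_\Omega(\gI)$ on $\BV(\Omega;\set{-1,1})$ and $+\infty$ elsewhere (Modica's theorem). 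So my task reduces to importing Modica's liminf and recovery constructions and weaving in the reconstruction term using the prescribed rates. A preliminary observation I will use throughout is that $\gL$ is Lipschitz in the $L^1(\Omega)$ norm: each integrand $u\mapsto \abs{\frac{1}{|B_\vx|}\int_{B_\vx} u}$ is $1/|B_\vx|$-Lipschitz in $L^1(\Omega)$, and taking expectation over $\vx\sim\gX$ preserves this.

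For the liminf, I would take any $u_\eps \too u$ in $L^1(\Omega)$ and, after passing to a subsequence realising the liminf, assume $\eps^{-1/2}\gF_\eps(u_\eps)$ is bounded. Boundedness of $\lambda\eps^{-1/2}\gL(u_\eps)$ combined with $\lambda\eps^{-1/2}\too \infty$ forces $\gL(u_\eps)\too 0$, and then $L^1$-continuity of $\gL$ gives $\gL(u)=0$. Modica's liminf inequality applied to the rescaled Cahn--Hilliard part then forces $u\in\BV(\Omega;\set{-1,1})$ and $\sigma_0\per_\Omega(\gI)\leq\liminf_\eps \eps^{-1/2}\gF_\eps(u_\eps)$, which is precisely $\gF_0(u)$.

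For the recovery sequence, the case $\gF_0(v) = +\infty$ is handled by any mild mollification of $v$: either $v\notin\BV(\Omega;\set{-1,1})$, in which case Modica's liminf makes the Cahn--Hilliard part diverge, or $\gL(v)\neq 0$, in which case $L^1$-continuity of $\gL$ together with $\lambda\eps^{-1/2}\too\infty$ makes the reconstruction term diverge. When $\gF_0(v)<\infty$, I would use Modica's recovery $v_\eps(\vx) = q(d_\gS(\vx)/\sqrt{\eps})$, where $q$ is the one-dimensional optimal transition profile between $-1$ and $+1$ and $d_\gS$ is the signed distance to $\gS = \partial\gI\cap\Omega$ (first approximating $\gI$ by smooth-boundary sets in $\BV$ if needed, as is standard). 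This yields both $\eps^{-1/2}\int_\Omega \eps\norm{\nabla v_\eps}^2 + W(v_\eps) \too \sigma_0\per_\Omega(\gI)$ and the tube estimate $\norm{v_\eps - v}_{L^1(\Omega)} = O(\sqrt{\eps})$, because $v_\eps$ deviates non-negligibly from $v$ only inside an $O(\sqrt{\eps})$ tubular neighbourhood of $\gS$. Using $\gL(v)=0$ and the Lipschitz bound, $\gL(v_\eps) = O(\sqrt{\eps})$, hence $\lambda\eps^{-1/2}\gL(v_\eps) = O(\lambda) \too 0$ because $\lambda\eps^{-1/4}\too 0$ implies $\lambda\too 0$.

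The main obstacle will be the recovery step: smoothing the sharp interface inevitably produces a nonzero reconstruction residual, and that residual is amplified by the diverging factor $\eps^{-1/2}$. The two rate conditions are calibrated precisely to resolve this tension --- the lower rate $\lambda\eps^{-1/2}\too\infty$ drives the liminf, forcing $\gL(u)=0$ in the limit, while the upper rate $\lambda\eps^{-1/4}\too 0$ is exactly what the tube-width bound $\norm{v_\eps - v}_{L^1}=O(\sqrt{\eps})$ asks of $\lambda$ in order that the reconstruction penalty along the recovery sequence vanish without ad hoc modifications to Modica's transition profile. Keeping this two-sided balance tight, rather than having to re-engineer $v_\eps$ to satisfy $\gL(v_\eps)=0$ exactly, is where the technical care is concentrated.
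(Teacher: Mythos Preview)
Your proposal is correct and follows essentially the same strategy as the paper: reduce to classical Modica--Mortola $\Gamma$-convergence for the Cahn--Hilliard part, use the lower rate $\lambda\eps^{-1/2}\to\infty$ to force $\gL(u)=0$ in the liminf step, and use the $L^1$ tube estimate on the standard recovery sequence together with the upper rate on $\lambda$ to kill the reconstruction term in the limsup step. Your packaging via the $L^1$-Lipschitz continuity of $\gL$ is slightly cleaner than the paper's explicit Fatou/triangle-inequality computations, and your tube estimate $\norm{v_\eps-v}_{L^1}=O(\sqrt{\eps})$ (from the optimal heteroclinic profile) is in fact sharper than the $O(\eps^{1/4})$ bound the paper imports from Rindler's construction---so your recovery step actually only needs $\lambda\to 0$, a weaker condition than the stated $\lambda\eps^{-1/4}\to 0$.
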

The conditions on $\lambda$ ask it to go to zero with $\eps$ but at certain slower asymptotic rate; intuitively, letting $\lambda=\eps^\alpha$, then the conditions boil to $\alpha\in(1/4,1/2)$. This is only a sufficient condition but gives a practical rough estimate for $\lambda$ given $\eps$ in \eqref{e:F_eps}. 

\paragraph{Intuition and proof idea.}
We next provide some intuition regarding the connection of $\gF_\eps$ to the surface perimeter $\per_\Omega(\gI)$, followed by the proof idea (given in full in the supplementary). 

The relation of $\gF_\eps$ and $\per_\Omega(\gI)$ can be understood using an auxiliary function introduced in the proof of Theorem 13.6 \cite{rindler2018calculus}: Let $H(t) = 2\int_{-1}^t \sqrt{W(s)} ds$ and compute the variation of the function $H(u(\vx))$:
\begin{align*}\label{e:nabla_H}
\int_\Omega \norm{\nabla H(u)}  &= \int_\Omega 2\sqrt{W(u)} \norm{\nabla u}  \\ &\leq \int_\Omega \sqrt{\eps}\norm{\nabla u}^2 +   \frac{1}{\sqrt{\eps}}W(u) ,
\end{align*}
where the equality uses the chain rule, and the inequality uses the standard (Young) inequality $2ab\leq a^2+b^2$.

Now, if properties (i) and (ii) hold, namely $u$ converges to $u_0$, which is a proper occupancy function (see \eqref{e:u_0}) with zero reconstruction error $\gL(u_0)=0$, then we have based on \eqref{e:def_BV}:
\begin{align*}
\int_\Omega \norm{\nabla H(u)} &\approx 
\int_\Omega \norm{\nabla H(u_0)} = \sup_{\norm{\vv}\leq 1}\int_\Omega H(u_0) \text{div}(\vv) \\
&= H(1)\int_\Omega \norm{\nabla \one_{\Omega\setminus \gI}} = H(1) \per_\Omega(\gI),
\end{align*}
where the third equality is due to the fact that $H(u_0) = H(-1)\one_\gI + H(1)\one_{\Omega\setminus \gI}=H(1)\one_{\Omega\setminus \gI}$, since $H(-1)=0$. The constant $\sigma_0$ from \eqref{e:F_0} is now given explicitly by $\sigma_0=2\int_{-1}^1 \sqrt{W}$. Lastly, since $\gL(u)\approx\gL(u_0)=0$ we have \begin{equation}\label{e:F_eps_geq_per_omega}
  \eps^{-\frac{1}{2}}\gF_\eps(u)\gtrapprox  \sigma_0 \per_\Omega(\gI).
\end{equation}

We proceed with the proof idea. The following is based on $\Gamma$-convergence results of the WCH theory, \eg,  \cite{modica1987gradient,sternberg1988effect,fonseca1988gradient}, and in particular the proof technique in \cite{rindler2018calculus}. The main difference in our setting compared to that of \cite{rindler2018calculus} is the addition of the surface reconstruction loss $\gL$. To prove $\Gamma$-convergence one must show that the two conditions formulated at Section \ref{ss:gamma_convergence}, namely, the $\liminf$ and recovery sequence, hold. 

The $\liminf$ is the relatively easier part of the WCH theory and resembles the intuition provided above (\eqref{e:F_eps_geq_per_omega}).  We show that for an $L^1(\Omega)$ convergent sequence $u_\eps\too u$, $\liminf_{\eps\dtoo 0}\gF_\eps(u_\eps)\geq \gF_0(u)$. The main observation here is that the rate of $\lambda=\lambda(\eps)$ forces the reconstruction loss to vanish in the limit, that is $\gL(u)=0$, showing property (ii) (property (i) comes out of the standard part of the WCH theory). After $\gL(u)=0$ is shown the $\liminf$ property comes out from existing arguments in the WCH theory.

The second part is establishing the recovery sequence, namely for every $u\in L^1(\Omega)$ find a converging sequence in $L^1(\Omega)$, $u_\eps\too u$ where $\gF_\eps(u_\eps)\too \gF_0(u)$. The main observation here is that we can utilize the recovery sequence constructed in the WCH theory and show that it is also a recovery sequence for the reconstruction functionals. The idea is to show that if the reconstruction part satisfies $\lambda(\eps) \gL(u_\eps)\too 0$ then it does not interfere with the lim equality $\gF_\eps(u_\eps)\too \gF_0(u)$. This what leads to the asymptotic balance of $\lambda$ required in the theorem formulation. \\

\textbf{Local minimizers.} Lastly, note that, as mentioned in Section \ref{ss:gamma_convergence}, $\Gamma$-convergence would imply convergence of local minimizers to strict local minima of $\gF_0$. 


\subsection{Signed distance functions}
In this section we make use of the degree of freedom in choosing the double well potential $W$ in \eqref{e:F_eps}, and show that a particular choice, together with a simple transformation of the minimizers $u_\eps$ provide the fourth property:

\quad \emph{(iv)}  \emph{Signed distance function}.

In more detail, the signed distance function to the surface $\gS$ enclosing a shape $\gI$ is 
\begin{equation}\label{e:sdf}
 d_\gS(\vx)=\parr{-\one_{\gI}(\vx) + \one_{\Omega\setminus\gI}(\vx)}  \min_{\vy\in\gS}\norm{\vx-\vy}.  
\end{equation}
We advocate the following choice for the potential:
\begin{equation}
    W(s)=s^2-2|s|+1.
\end{equation}
The graph of $W$ is shown in Figure \ref{fig:setup}(b). 
Given a minimizer $u_\eps$ of \eqref{e:F_eps} we define our final INR by
\begin{equation}\label{e:w}
w_\eps(\vx) = -\sqrt{\eps}\log\parr{1-\abs{u_\eps(\vx)}}\sign(u_\eps(\vx)).
\end{equation}

We claim that $w_\eps$ will approximate $d_\gS$. We first show:
\begin{theorem}\label{thm:laplace_u}
Let $u_\eps\in W^{1,2}(\Omega)$ be a (local) minimizer of $\gF_\eps$, and $O\subset\Omega \setminus \cup_{\vx\in\gX} B_\vx$ a domain where $u_\eps\ne 0$. Then, $u_\eps$ is smooth in the classical sense in $O$ and satisfies 
\begin{equation}\label{e:linear}
-\eps\Delta u_\eps + u_\eps -\sign(u_\eps) = 0.
\end{equation}
\end{theorem}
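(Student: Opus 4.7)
The plan is to derive the stated semilinear PDE as a first-variation/Euler--Lagrange condition and then bootstrap from $W^{1,2}$ regularity up to $C^\infty$ via elliptic theory. The key observation that makes the argument work is that $O$ is chosen disjoint from every reconstruction ball $B_\vx$, so the non-local term $\gL$ plays no role.

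\textbf{Step 1 (reducing to the bulk energy).} For any $\phi\in C^\infty_c(O)$, we have $\supp\phi\cap B_\vx=\emptyset$ for each $\vx\in\gX$, so $\gL(u_\eps+t\phi)=\gL(u_\eps)$ for all $t\in\Real$. Hence local minimality of $u_\eps$ for $\gF_\eps$ forces $u_\eps$ to be a local minimizer of the bulk energy $u\mapsto\int_\Omega \eps\norm{\nabla u}^2+W(u)$ against all variations compactly supported in $O$. Also, one may assume $|u_\eps|\le 1$ almost everywhere: truncating any $u$ to $[-1,1]$ cannot increase $\int W(u)$ (since $W$ is non-decreasing in $|s|$ for $|s|\ge 1$) and cannot increase $\int\norm{\nabla u}^2$, while it does not affect $\gL(u)$ in a harmful way since $W$ already pins feasible minimizers into $[-1,1]$; this yields an a priori $L^\infty$ bound which we will need for regularity.

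\textbf{Step 2 (weak Euler--Lagrange).} Write $W(s)=s^2-2|s|+1$, so $W'(s)=2(s-\sign(s))$ for $s\ne 0$. Fix $\phi\in C^\infty_c(O)$. On the $L^1$-open neighborhood $\{u_\eps>0\}\cap O$ and its negative counterpart the map $s\mapsto W(s)$ is smooth, and for $|t|$ small enough the sign of $u_\eps+t\phi$ agrees with that of $u_\eps$ on $\supp\phi\subset O$ (using $u_\eps\ne 0$ on $O$). Dominated convergence then lets us differentiate under the integral:
\[
0=\frac{d}{dt}\bigg|_{t=0}\!\!\int_\Omega\eps\norm{\nabla(u_\eps+t\phi)}^2+W(u_\eps+t\phi)
=\int_O 2\eps\,\nabla u_\eps\!\cdot\!\nabla\phi+2\bigl(u_\eps-\sign(u_\eps)\bigr)\phi.
\]
Thus $u_\eps$ is a distributional solution on $O$ of
\[
-\eps\Delta u_\eps+u_\eps-\sign(u_\eps)=0.
\]

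\textbf{Step 3 (bootstrap to classical smoothness).} Since $|\sign(u_\eps)|\le 1$ and $|u_\eps|\le 1$, the right-hand side $f:=\sign(u_\eps)-u_\eps$ lies in $L^\infty_{\mathrm{loc}}(O)$. Standard $L^p$ elliptic regularity for the operator $-\eps\Delta+I$ yields $u_\eps\in W^{2,p}_{\mathrm{loc}}(O)$ for every $p<\infty$, and Morrey's embedding gives $u_\eps\in C^{1,\alpha}_{\mathrm{loc}}(O)$. In particular, $u_\eps$ admits a continuous representative on the connected open set $O$; combined with the hypothesis $u_\eps\ne 0$ on $O$, the sign $\sign(u_\eps)$ is constantly $+1$ or $-1$ throughout $O$. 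The PDE therefore becomes $-\eps\Delta u_\eps+u_\eps=\pm 1$ with a $C^\infty$ right-hand side, so Schauder bootstrap gives $u_\eps\in C^\infty(O)$ and the equation holds in the classical sense.

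\textbf{Anticipated obstacle.} The only delicate point is the non-smoothness of $W$ at the origin, which is precisely why the hypothesis $u_\eps\ne 0$ on $O$ enters. Before continuity of $u_\eps$ is established, ``$u_\eps\ne 0$ on $O$'' must be understood in the sense that $O$ is contained (up to a null set) in one of the level sets $\{u_\eps>0\}$ or $\{u_\eps<0\}$; if one prefers not to assume this, the same argument can be carried out first on open subsets of the form $\{u_\eps>\delta\}$ and $\{u_\eps<-\delta\}$ (on which the sign is locked), and the elliptic bootstrap in Step 3 then upgrades continuity and lets the conclusion be globalized to all of $O$.
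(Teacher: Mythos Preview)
Your proof is correct and follows essentially the same approach as the paper: derive the weak Euler--Lagrange equation on $O$ (where the reconstruction term $\gL$ is inert and $W'(s)=2(s-\sign s)$ is well-defined since $u_\eps\ne 0$), then invoke elliptic regularity to bootstrap from $W^{1,2}$ to $C^\infty$. Your version is in fact more explicit than the paper's, which simply cites a standard Euler--Lagrange theorem and Corollary 8.11 of Gilbarg--Trudinger; your ``anticipated obstacle'' paragraph correctly identifies and resolves the only subtlety (the a~priori meaning of $u_\eps\ne 0$ before continuity is established).
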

This theorem is proved by first applying the Euler-Lagrange conditions (see \eg, Theorem 3.1 in \cite{rindler2018calculus}) for a minimizer $u$ of $\gF_\eps$ in $O$, and noting that $W'(s)=2s-2\sign(s)$, which is differentiable in $O$. In particular, \eqref{e:linear} will be satisfied in the weak sense with any test function $\psi\in C_c^\infty(O)$.
%
Second, regularity results for elliptic operators (\eg, Corollary 8.11 in \cite{gilbarg2015elliptic}) show that $u$ is smooth in $O$ and satisfies \eqref{e:linear} in the classical sense.  See the full details in the supplementary. 

Theorem \ref{thm:laplace_u} can be used to show that $w_\eps$ (as defined in \eqref{e:w}) is a solution to the so-called \emph{viscosity Eikonal equation}, whose solutions are basically a smoothed version of signed distance functions. 
\begin{theorem}\label{thm:visc_eikonal}
    Let $O\subset\Omega$ be a domain as defined in Theorem \ref{thm:laplace_u}. Then, over $O$, $w_\eps$ satisfies
    \begin{equation}\label{e:w_eps_viscosity_eikonal}
    -\sqrt{\eps}\Delta w_\eps + \sign(u_\eps) (\norm{\nabla w_\eps}^2-1) = 0
    \end{equation}
\end{theorem}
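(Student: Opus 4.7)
The plan is to verify \eqref{e:w_eps_viscosity_eikonal} by a direct chain-rule computation and then reduce it to the Euler--Lagrange equation \eqref{e:linear} supplied by Theorem \ref{thm:laplace_u}. Since $u_\eps$ is smooth and nowhere zero on $O$, the quantity $\sigma := \sign(u_\eps)$ is locally constant on $O$, so on each connected component the defining formula \eqref{e:w} can be rewritten as $w_\eps = -\sqrt{\eps}\,\sigma \log(1 - \sigma u_\eps)$, which is smooth wherever $|u_\eps|<1$ (I will return to this point below).

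Treating $\sigma$ as a local constant, the chain rule gives
\[\nabla w_\eps \;=\; \frac{\sqrt{\eps}\,\nabla u_\eps}{1-\sigma u_\eps}, \qquad \|\nabla w_\eps\|^2 \;=\; \frac{\eps\,\|\nabla u_\eps\|^2}{(1-\sigma u_\eps)^2},\]
and differentiating once more, using the quotient rule,
\[\Delta w_\eps \;=\; \frac{\sqrt{\eps}\,\Delta u_\eps}{1-\sigma u_\eps} \;+\; \frac{\sqrt{\eps}\,\sigma\,\|\nabla u_\eps\|^2}{(1-\sigma u_\eps)^2}.\]
Substituting these into $-\sqrt{\eps}\Delta w_\eps + \sigma(\|\nabla w_\eps\|^2 - 1)$, the two gradient-squared contributions cancel exactly (using $\sigma^2 = 1$), and the residual simplifies to
\[-\frac{\eps\,\Delta u_\eps}{1-\sigma u_\eps} - \sigma \;=\; \frac{-\eps\Delta u_\eps + u_\eps - \sigma}{1-\sigma u_\eps}.\]
The numerator is exactly the left-hand side of \eqref{e:linear} with $\sign(u_\eps)=\sigma$, so it vanishes by Theorem \ref{thm:laplace_u}, giving \eqref{e:w_eps_viscosity_eikonal}.

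The main (and essentially only) obstacle is ensuring $|u_\eps|<1$ on $O$, so that the log in \eqref{e:w} is well-defined and smooth. I expect $|u_\eps|\le 1$ almost everywhere from a truncation comparison: since $W(s) = (|s|-1)^2$ is strictly increasing in $|s|$ for $|s|>1$ and clipping $u_\eps$ to $[-1,1]$ pointwise can only decrease $\int \|\nabla u_\eps\|^2$, any excursion of $|u_\eps|$ above $1$ would strictly lower $\gF_\eps$ (the reconstruction term $\gL$ is not adversely affected, since averages of the clipped function over the balls $B_\vx$ stay closer to zero). Strict inequality $|u_\eps|<1$ at an interior point of $O$ then follows from the strong maximum principle applied to the linear elliptic operator $\eps\Delta - I$ governing \eqref{e:linear}: an interior point where $u_\eps = \sigma$ would force $u_\eps \equiv \sigma$ on that component, which we may exclude (on such a trivial component $w_\eps$ is identically zero and the claim is vacuous). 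The detailed verification of these bounds I would defer to the supplementary; the identity \eqref{e:w_eps_viscosity_eikonal} itself reduces to the short algebraic manipulation above.
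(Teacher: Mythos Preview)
Your proposal is correct and follows essentially the same approach as the paper: a direct chain-rule computation of $\nabla w_\eps$ and $\Delta w_\eps$, substitution into the left-hand side of \eqref{e:w_eps_viscosity_eikonal}, cancellation of the $\|\nabla u_\eps\|^2$ terms, and reduction of the remaining expression to the Euler--Lagrange equation \eqref{e:linear}. The only differences are cosmetic: the paper treats the cases $u_\eps>0$ and $u_\eps<0$ separately, whereas you unify them via the locally constant $\sigma=\sign(u_\eps)$; and you explicitly flag and sketch the bound $|u_\eps|<1$ (needed for the log to be defined), which the paper's proof leaves implicit.
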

The proof of this theorem is given in the supplementary material; the proof follows a direct computation. Note that the idea behind defining $w_\eps$ is the so-called Cole-Hopf transformation, known for transforming some eikonal-type non-linear equations to linear ones, see Section 4.4.1 in \cite{evans1998partial}, and \cite{schieborn2006viscosity,gurumoorthy2009schrodinger,sethi2012schrodinger,belyaev2015variational}.

Theorem \ref{thm:visc_eikonal} asserts that $w_\eps$ solves the viscosity Eikonal equation and it is expected, under certain conditions, to reproduce the signed distance function via the vanishing viscosity method \cite{crandall1983viscosity,schieborn2006viscosity}. We further show a closely related result by \citet{varadhan1967behavior} allowing to prove that on some fixed domain $O$, $w_\eps$ defined by equations \ref{e:w} and \ref{e:linear} converge to the signed distance function: 
\begin{theorem}\label{thm:var}
    Let $O$ be an open set, and $u_\eps$ a solution to \eqref{e:linear} in $O$, $u_\eps=0$ on $\partial O$ and $u_\eps\ne 0$ in $O$. Then $ w_\eps \too  \sign(u_\eps) d_{\partial O}$ pointwise uniformly in any compact subset $O\cup\partial O$. 
\end{theorem}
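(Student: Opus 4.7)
The plan is to reduce the statement to the classical Varadhan elliptic lemma for $-\eps\Delta v + v = 0$ via a one-line affine change of variables. Since $u_\eps\ne 0$ throughout $O$, on every connected component of $O$ the sign $\sign(u_\eps)$ is constant. Replacing $u_\eps$ by $-u_\eps$ if necessary (the equation from Theorem \ref{thm:laplace_u} is odd, and the definition of $w_\eps$ in \eqref{e:w} is odd in $u_\eps$), it suffices to treat the case $u_\eps>0$ in $O$. In that case \eqref{e:linear} becomes the linear inhomogeneous problem
\begin{equation*}
-\eps \Delta u_\eps + u_\eps = 1 \quad \text{in }O, \qquad u_\eps = 0 \text{ on }\partial O.
\end{equation*}

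Next, I would introduce the auxiliary function $v_\eps \eqdef 1 - u_\eps$. A direct computation gives
\begin{equation*}
-\eps\Delta v_\eps + v_\eps = \eps\Delta u_\eps + 1 - u_\eps = (u_\eps-1) + 1 - u_\eps = 0 \ \text{ in } O, \qquad v_\eps = 1 \text{ on }\partial O,
\end{equation*}
so $v_\eps$ satisfies exactly the linear homogeneous PDE that appears in the classical Varadhan setup. A standard weak maximum principle argument (comparing $u_\eps$ to the constant sub/supersolutions $0$ and $1$) gives $0<u_\eps<1$ in $O$, so $v_\eps\in(0,1]$ and $\log v_\eps$ is well-defined and non-positive.

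Third, I would invoke the classical result of \citet{varadhan1967behavior}: for a family $\{v_\eps\}$ of positive solutions of $-\eps\Delta v_\eps + v_\eps = 0$ in $O$ with $v_\eps = 1$ on $\partial O$, one has
\begin{equation*}
-\sqrt{\eps}\,\log v_\eps \too d_{\partial O} \quad \text{uniformly on compact subsets of } O\cup\partial O
\end{equation*}
as $\eps\dtoo 0$. Unraveling the definition \eqref{e:w}, since $\sign(u_\eps)=1$ and $|u_\eps|=u_\eps=1-v_\eps$,
\begin{equation*}
w_\eps = -\sqrt{\eps}\log(1-|u_\eps|)\sign(u_\eps) = -\sqrt{\eps}\log v_\eps \too d_{\partial O} = \sign(u_\eps)\, d_{\partial O},
\end{equation*}
which is the claimed conclusion. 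The case $\sign(u_\eps)\equiv -1$ is handled by the same reduction after flipping the sign of $u_\eps$.

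The main obstacle is the appeal to Varadhan's lemma itself, which is usually formulated under mild boundary regularity for $\partial O$ (e.g.\ an exterior cone / exterior ball condition), needed to build the barriers of the form $\exp(-d_{\partial O}/\sqrt{\eps})$ that give both the $\liminf$ and $\limsup$ bounds. In our setting $O$ is a connected component of the (super)level set $\{u_\eps\ne 0\}$, so some care is required to ensure enough regularity; if needed one can either assume $\partial O$ is Lipschitz (consistent with how $\partial O$ is used in the paper) or exhaust $O$ from inside by smooth subdomains and pass to the limit. Uniformity up to $\partial O$ is then automatic, since $d_{\partial O}$ is continuous and $w_\eps=0$ on $\partial O$, so the Varadhan sub/supersolution barriers pinch $w_\eps$ between $d_{\partial O}\pm o(1)$ throughout $O\cup\partial O$.
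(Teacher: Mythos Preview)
Your proposal is correct and follows essentially the same route as the paper: the affine substitution $v_\eps = 1 - u_\eps$ (respectively $v_\eps = 1 + u_\eps$ when $u_\eps<0$) reduces \eqref{e:linear} to the homogeneous problem $-\eps\Delta v_\eps + v_\eps = 0$ with $v_\eps=1$ on $\partial O$, after which Varadhan's Theorem~2.3 yields $-\sqrt{\eps}\log v_\eps \too d_{\partial O}$. Your added maximum-principle step to guarantee $v_\eps\in(0,1]$ and your discussion of the boundary-regularity hypotheses needed to invoke Varadhan are useful details that the paper omits.
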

Note that $\sign(u_\eps)$ is well defined in $O$ since we assume that it does not vanish in $O$. 
\begin{proof}
First, assume $\sign(u_\eps)>0$ in $O$. Then, $u_\eps$ satisfies:
\begin{equation*}
\begin{aligned}
-\eps\Delta u_\eps + u_\eps - 1 = 0 & \qquad \text{in } O\\
u_\eps = 0 & \qquad \text{in } \partial O
\end{aligned}
\end{equation*}
Now the change of variables $v_\eps=1-u_\eps$ leads to 
\begin{equation}\label{e:var_linear}
\begin{aligned}
\frac{1}{2}\Delta v_\eps  = \frac{1}{2\eps}v_\eps  & \qquad \text{in } \Omega\\
v_\eps = 1 & \qquad \text{in } \partial\Omega
\end{aligned}
\end{equation}
Therefore, Theorem 2.3 in \cite{varadhan1967behavior} with $\lambda=\frac{1}{2\eps}$ now implies 
$$-\sqrt{\eps}\log(v_\eps)=-\sqrt{\eps}\log(1-u_\eps)\xrightarrow{\eps\too 0} d_{\partial O}$$
uniformly in compact subsets of $O\cup\partial O$. The second case where $\sign(u_\eps)<0$ in $O$ is proved similarly, and provided in the supplementary. \vspace{-10pt}
\end{proof}
Using Theorem \ref{thm:var} to provide a rigorous proof of convergence of $w_\eps$ requires two extra things: first, deal with the reconstructions constraints, \eqref{e:L}, that inject another perturbation in vicinity of data points (but can be made arbitrary small), and establishing a version of Theorem \ref{thm:var} with slightly moving boundaries $O_\eps$. We leave these further analyses to future work. 
 


\subsection{Additional losses}
Theorem \ref{thm:visc_eikonal} hints of the possibility that $w_\eps$ does not satisfy the viscosity Eikonal equation near the input geometry $\gX$.  Indeed, this is verified in the experiments section. Therefore, although not included in our analysis (and marked as an interesting future work), we found that empirically incorporating losses constraining the gradients of $w_\eps$ to unit length at $\gX$ is useful for INR training, especially in 3D. There are two use cases: If given as input a normal field over $\gX$, namely $\vn:\gX\too \gS(\Real^d)$, where $\gS(\Real^d)$ is the unit sphere, then the loss \begin{equation}\label{e:N_intr}
    \gN_{\text{intr}}(u) = \E_{\vx\sim \gX} \norm{\vn(\vx) -\nabla w(\vx) }^p
\end{equation}
encourages the normal to $\gS$ to be close to $\vn$ over $\gX$. Note that for $\vx\in\gS$, \eqref{e:w} implies that $\nabla w(\vx)=\sqrt{\eps}u(\vx)$. In our implementation we used $p=1$. 

In the case the normals at $\gX$ are not known we incorporate the unit gradient constraint over $\gX$, similar to \cite{gropp2020implicit}:
\begin{equation}\label{e:N_unit}
    \gN_{\text{unit}}(u) = \E_{\vx\sim \gX} \abs{1 - \norm{\nabla w(\vx)} }^p,
\end{equation}
where in our implementation we used $p=2$. 

Our final loss is 
\begin{equation}\label{e:loss}
\text{loss}(u) = \gF_\eps+\mu \gN_{\star},\vspace{-5pt}
\end{equation}
where $\gN_\star$ is defined either by \eqref{e:N_intr} or \eqref{e:N_unit}, and $\mu\geq 0$ is a hyperparameter. In total we have three hyper-parameters $\eps,\lambda,\mu$; we detail below how we set those.

\subsection{Implementation details}\label{s:implementation_details}


\paragraph{Network architecture and initialization.}
We have used the same architecture for $u$ as previous work \cite{Park_2019_CVPR,gropp2020implicit}, namely $u:\Real^d\too\Real$ is a multilayer perceptron (MLP) with 8 layer of 512 neuron each, and a single skip connection concatenating the input vector to the input to the 4-th layer. We used either ReLU or Softplus activation with $\beta=100$, and the geometric initialization of \cite{atzmon2019sal} initializing the network to be an approximated SDF to the $d$-dimensional unit sphere. We normalized $\gX$ to have unit max norm, and took $\Omega$ to be a scaled version of the axis-aligned bounding box of $\gX$; we use as scale $1.5$ in $d=3$, and  $2$ in $d=2$. 

In our loss, \eqref{e:loss}, we have three hyper-parameters $\eps,\lambda,\mu$. In all the experiments we used $\eps=0.01$, and did a parameter search over $\lambda,\mu \in [0.2,20]$ (note that $\eps^{1/3}\approx 0.2$, and $1/3$ is in the range asserted by Theorem \ref{thm:main_gamma}), including also $\mu=0$. 
The integrals in the loss are stochastically estimated by sampling $\vx\sim \gU(\Omega)$, the uniform distribution over $\Omega$, for the WHC integral, and $\vx\sim \gN(\vx,\sigma^2)$, the normal distribution centered at $\vx$ with standard deviation $\sigma\in\set{10^{-3},10^{-4}}$ for $\mathcal{L}$.  In some experiments we used Fourier features \cite{tancik2020fourfeat} as a first fixed layer in the network, $\delta_k(\vx)\in\Real^{2kd}$, where $k$ is representing the number of frequencies used. See the  supplementary for more implementation details. \vspace{-8pt}

%

\section{Previous work}\vspace{-5pt}

Representing shapes as zero level sets of neural networks has been popularized recently \cite{Park_2019_CVPR,mescheder2019occupancy,chen2019learning,atzmon2019controlling}, and since been used for many applications in 3D vision and graphics. One standing challenge, tackled in this paper is learning INRs from raw geometric data. Similar problem was addressed in \cite{atzmon2019sal,gropp2020implicit,sitzmann2020implicit,tancik2020fourfeat,atzmon2019sal,williams2020neural}. In IGR \cite{gropp2020implicit} the authors use the implicit regularization property of neural network training to fit a smooth SDF to an input point cloud with or without normals. \citet{williams2020neural} use the limit kernel of infinitely wide shallow networks to directly solve a kernel regression problem to fit an implicit representation to a point cloud with normals. High frequency signal learning techniques show that adding an input Fourier feature layer \cite{tancik2020fourfeat}, or using periodic activation functions \cite{sitzmann2020implicit} facilitate faster learning of high frequency details. 
In this group of works the minimizers of the loss are generally not well understood and indeed, as we show in the experiments, sometimes lead to undesirable reconstructed parts. The minimal perimeter property seems to add an important inductive bias.

In the signed agnostic learning (SAL) method \cite{atzmon2019sal,atzmon2020sald} unsigned distance is fitted by a sign-agnostic regression to introduce a signed local minimum. In fact, SAL can be seen as a version of the unregularized loss in \eqref{e:F}. Indeed, the loss $\abs{|a|-b}^\ell$ (see equation 5 in \cite{atzmon2019sal}) is a double well potential as a function of $a$, similar to $W$ in our formulation, with the difference of spatially changing $W$. In this sense our work suggests that adding a gradient regularization to SAL could be a useful idea. Generalizing our analysis to spatially varying potential, could also be interesting future research.



\begin{figure}
    \centering
    \includegraphics[width=\columnwidth]{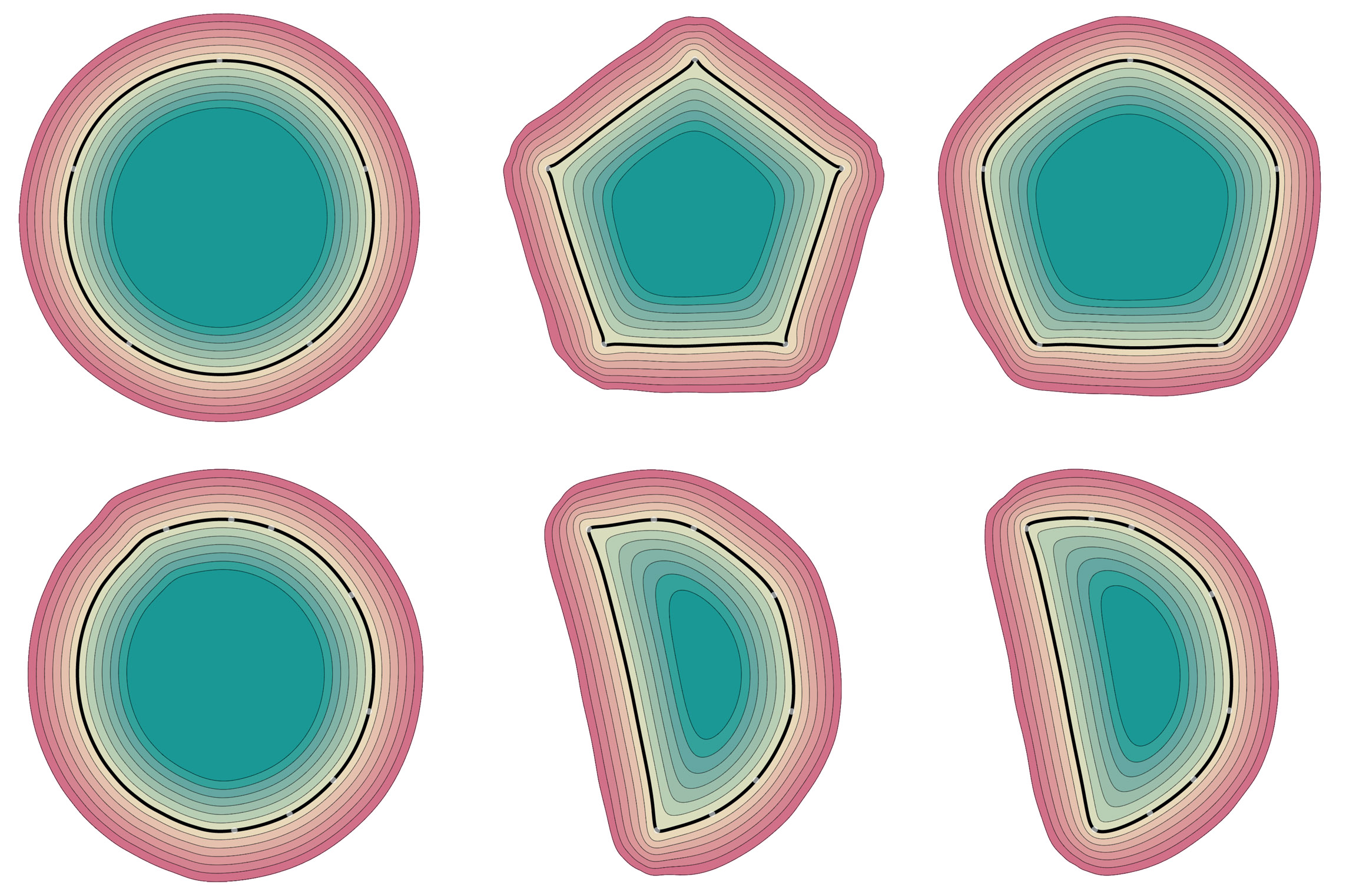}\vspace{-15pt}
    \caption{Reconstruction of 5-gon point cloud (top) and samples from half a circle (bottom). The left column shows the results of IGR, the middle columns shows PHASE without gradient loss, and right column is PHASE with gradient loss. Note the minimal perimeter property of the PHASE solutions. \vspace{-15pt}}
    \label{fig:ngon}
\end{figure}

\section{Experiments}

We present experiments for $d\in\set{2,3}$ with the phase transition loss (PHASE) and compare to relevant baselines: IGR \cite{gropp2020implicit}, DGP \cite{williams2019deep}, SIREN \cite{sitzmann2020implicit}, FFN \cite{tancik2020fourfeat}, NSP \cite{williams2020neural}. For evaluation we use the same metrics as in \cite{williams2020neural}: one-sided and double-sided Chamfer ($d^\rightarrow_C(\gY_1,\gY_2)$, $d_C(\gY_1,\gY_2)$) and Hausdorff distances ($d^\rightarrow_H(\gY_1,\gY_2)$, $d_H(\gY_1,\gY_2)$), see the supplementary for exact definitions.

\begin{table}[t]\scriptsize	
    \centering
    \begin{tabular}{c|c|c|c|c|c} 
               \multicolumn{2}{c}{} & \multicolumn{2}{|c|}{Ground Truth}  & \multicolumn{2}{c}{Scans}\\ \hline
         Model & Method & $d_C$ & $d_H$ & $d_C^\too$ & $d_H^\too$  \\  \hline
         \multirow{6}{*}{Anchor}  
          & DGP & 0.33 & 8.82 & 0.08 & 2.79 \\
          & IGR & 0.22 & 4.71 & 0.12 & 1.32 \\
          & SIREN & 0.27 & 6.18 & 0.13 & 1.88 \\
          & FFN & 0.31 & 4.49 & 0.10 & 0.10 \\
          & NSP & 0.22 & 4.65 & 0.11 & 1.11 \\
          & PHASE & \textbf{0.21} & \textbf{4.29} & 0.09 & 1.23 \\ \hline
         \multirow{6}{*}{Daratech}  
          & DGP & 0.20 & 3.14 &0.04 &1.89 \\
          & IGR & 0.25 & 4.01 & 0.08 & 1.59 \\
          & SIREN & 0.29 & 4.46 & 0.12 & 1.65 \\
          & FFN & 0.34 & 5.97 & 0.10 & 0.10 \\
          & NSP & 0.21 & 4.35 & 0.08 & 1.14 \\
          & PHASE & \textbf{0.18} & \textbf{2.92} & 0.08 & 1.80 \\
          \hline
         \multirow{6}{*}{DC}  
          & DGP & 0.18 & 4.31 & 0.04 & 2.53 \\
          & IGR & 0.17 & 2.22 & 0.09 & 2.61 \\
          & SIREN & 0.18 & 2.27 & 0.09 & 1.92 \\
          & FFN & 0.20 & 2.87 & 0.10 & 0.12 \\
          & NSP & \textbf{0.14} & \textbf{1.35} & 0.06 & 2.75 \\
          & PHASE & 0.15 & 2.52 & 0.05 & 2.78 \\
          \hline
         \multirow{6}{*}{Gargoyle}  
          & DGP & 0.21 & 5.98 & 0.06 & 3.41 \\
          & IGR & \textbf{0.16} & 3.52 & 0.06 & 0.81 \\
          & SIREN & 0.29  & 3.90 & 0.13 &1.93 \\
          & FFN & 0.22 &5.04& 0.09 &0.09 \\
          & NSP & \textbf{0.16} &3.20& 0.08& 2.75 \\
          & PHASE & \textbf{0.16} & \textbf{3.14} & 0.07 & 1.09 \\
          \hline
         \multirow{6}{*}{Lord Quas}  
          & DGP & 0.14 & 3.67 & 0.04 & 2.03 \\
          & IGR & 0.12 & 1.17 & 0.07 & 0.98 \\
          & SIREN & 0.13 & 0.89 & 0.06 & 0.96 \\
          & FFN & 0.35 & 3.90 & 0.06 & 0.06 \\
          & NSP & 0.12 & \textbf{0.69} & 0.05 & 0.62 \\
          & PHASE & \textbf{0.11} & 0.96 & 0.04 & 0.96 
    \end{tabular}\vspace{-5pt}
    \caption{Surface reconstruction results on the benchmark of \cite{williams2019deep}. }\vspace{-10pt}
    \label{tab:dgp}
\end{table}

\subsection{2D evaluation}

Figure \ref{fig:u_ns_w} depicts the PHASE learned densities $u_\eps$ (bottom row) and their log transforms $w_\eps$ defined in \eqref{e:w} (top row). The input point cloud in each example is depicted with white dots in the bottom row. In this example we use the loss in \eqref{e:loss} with $\mu=0.1$, and $\lambda=0.3$. Note that indeed $u_\eps$ approximates an occupancy function, while $w_\eps$ approximates a signed distance function.

Figure \ref{fig:ngon} compares the reconstructions of IGR (left), and PHASE with $\mu=0$ (middle), and PHASE with $\mu=0.1$ (right), the input point cloud in each example is depicted with white dots; the zero level-set is in bold; in both cases $\lambda=0.3$. Note that the PHASE results indeed exhibit smaller perimeter surface, while the IGR results tend to produce higher area extrapolation. Further note that incorporating the gradient loss $\gN_{\text{unit}}$ (with no normal data) increase the regularity of the reconstructed $\gS$ at the data points. 

Figure \ref{fig:grads} depicts the deviation of the gradient norm, $\norm{\nabla w_\eps}$, from $1$ (right column) when $\mu=0$. As anticipated by Theorem \ref{thm:visc_eikonal} (by excluding the union of balls $B_\vx$) the gradient norm deviates from $1$ more strongly closer to the data points. As described above, this provides some justification to the losses in equations \ref{e:N_intr}-\ref{e:N_unit}. Another area of higher error is the medial axis, namely the gradient discontinuity locus of the signed distance function. This can be explained by the fact that $w_\eps$, is a solution to the viscosity Eikonal PDE (\eqref{e:w_eps_viscosity_eikonal}) and therefore produces smooth solutions, in contrast to the SDF in this region. Note, that there is no smooth function satisfying $\norm{\nabla f}=1$ everywhere.

\begin{figure}
    \centering
    \begin{tabular}{ccc}
         \includegraphics[width=0.3\columnwidth]{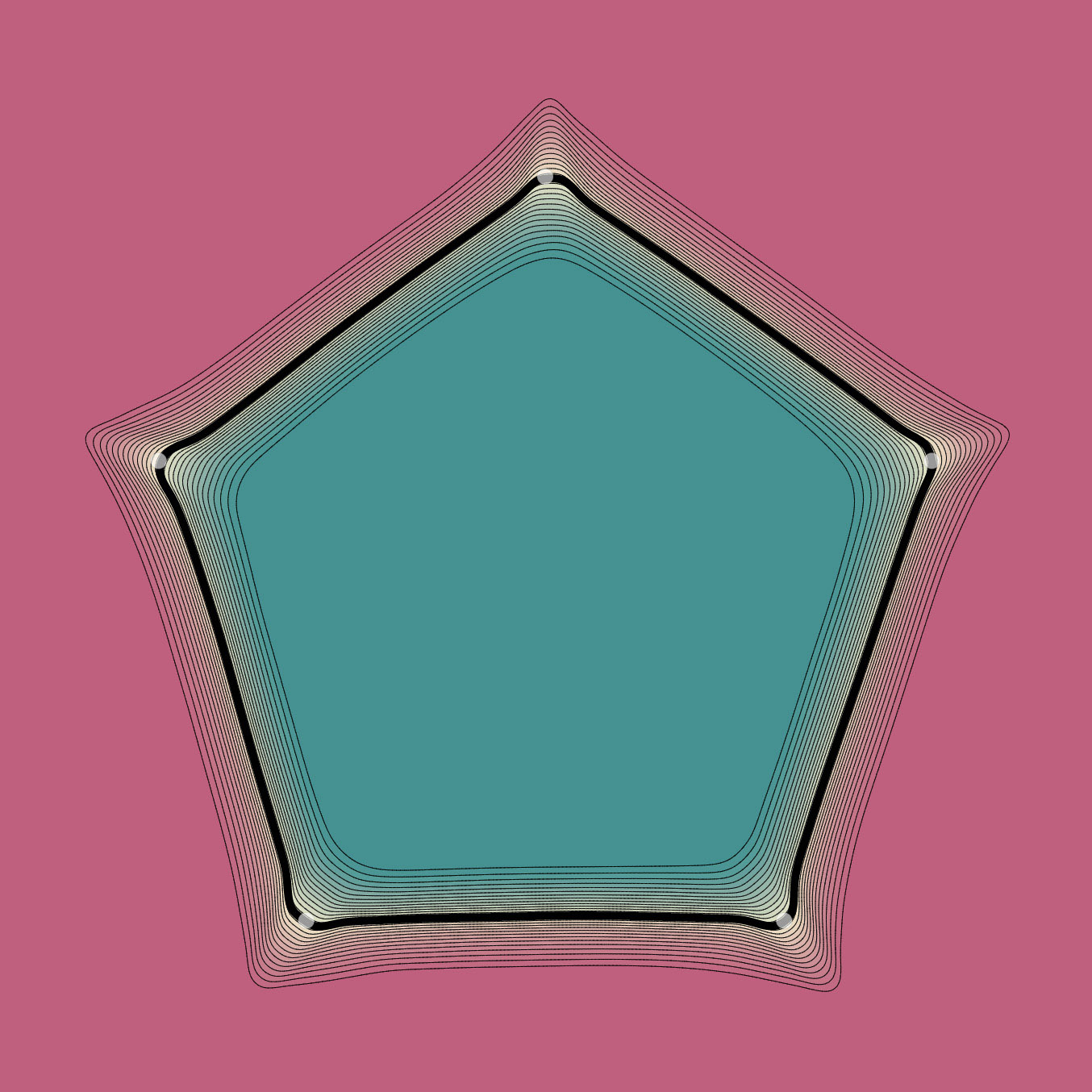} &
         \includegraphics[width=0.3\columnwidth]{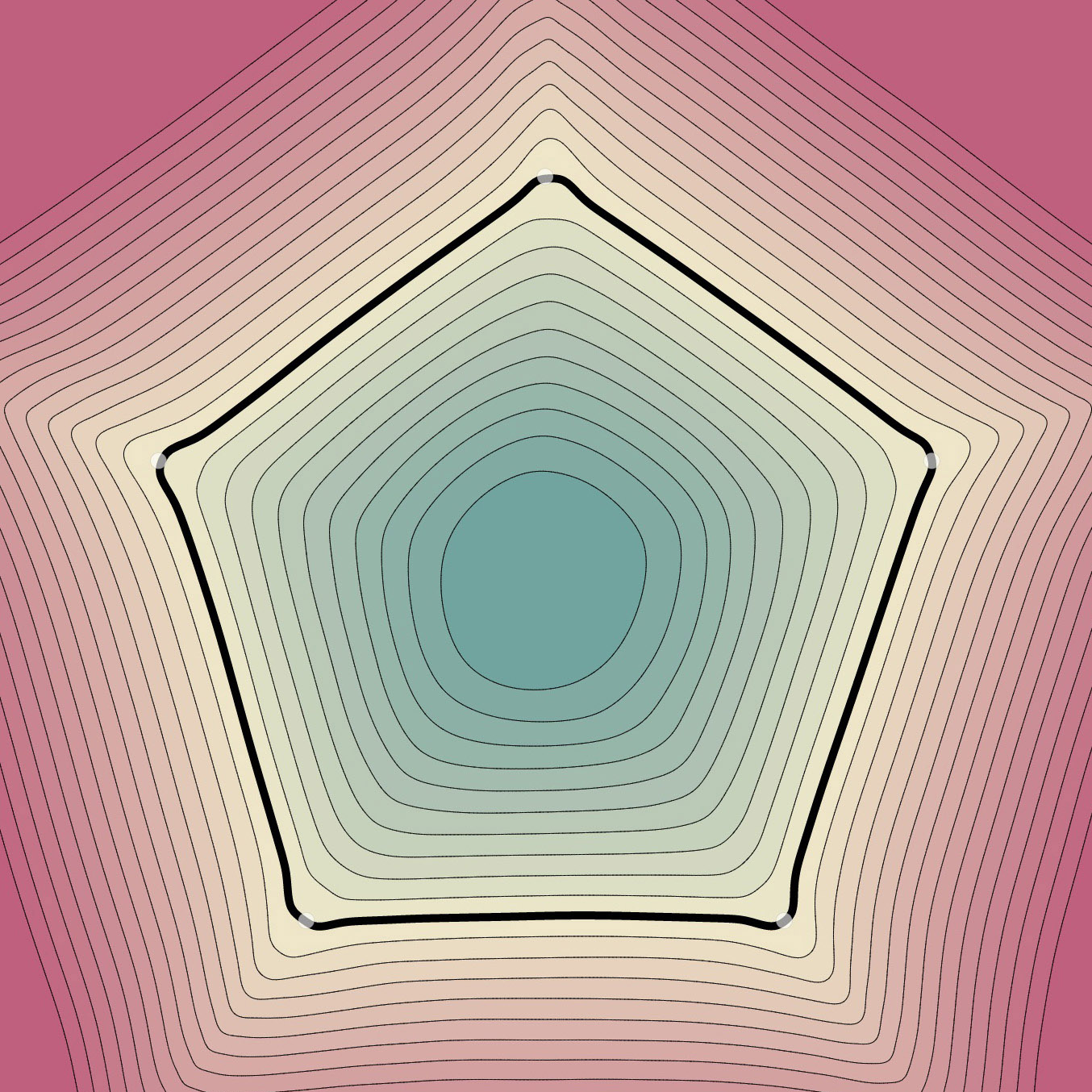} &
         \includegraphics[width=0.3\columnwidth]{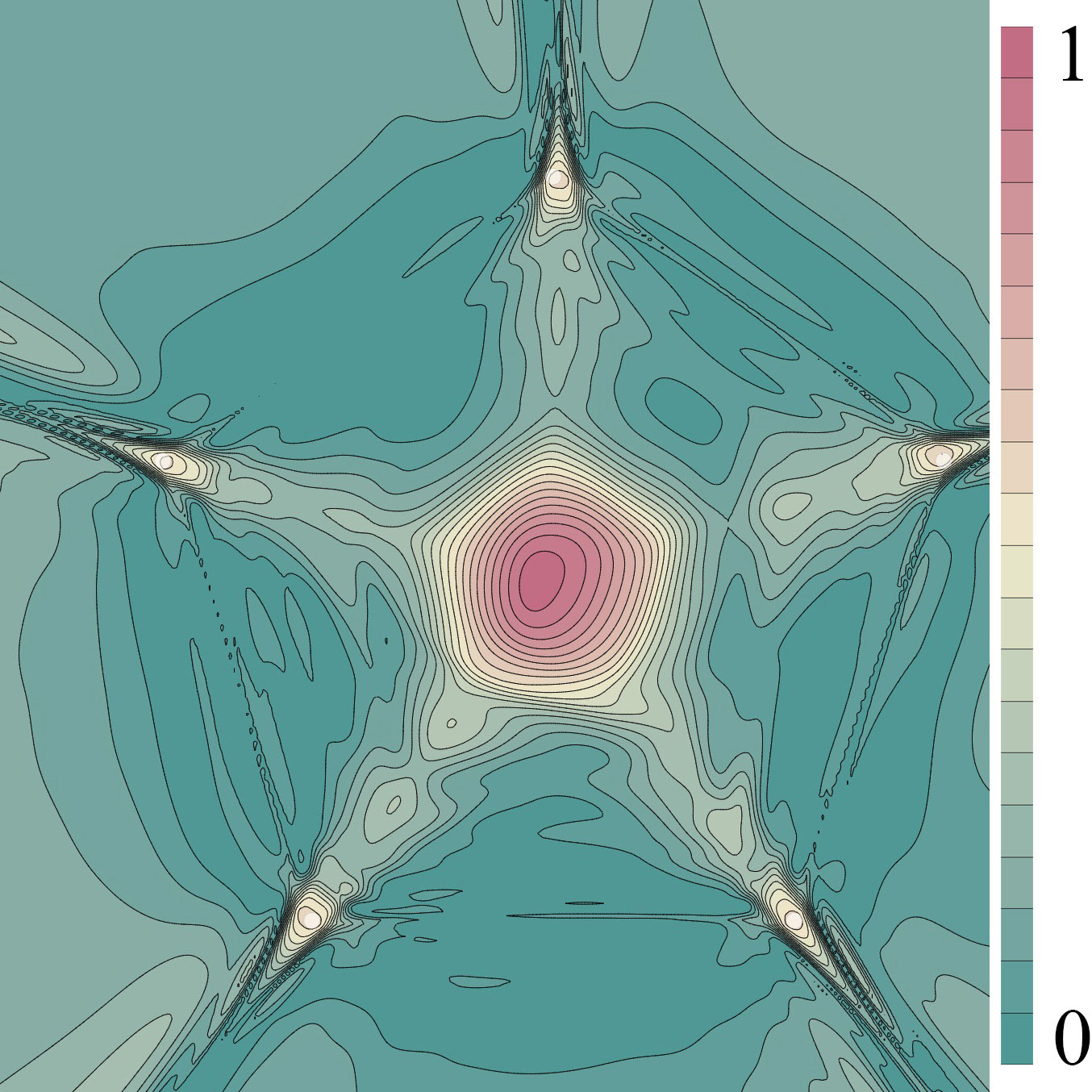} \\
         \includegraphics[width=0.3\columnwidth]{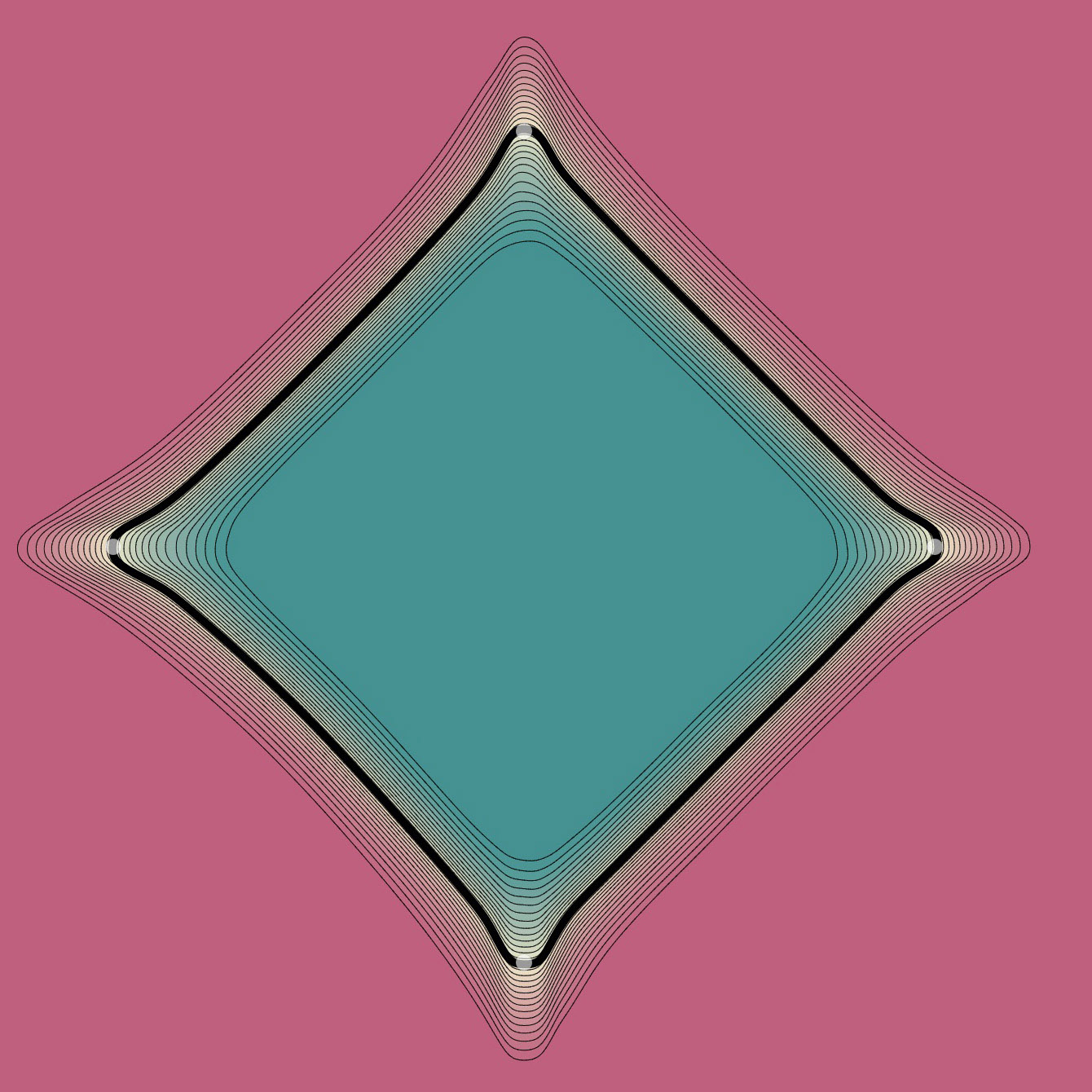} &
         \includegraphics[width=0.3\columnwidth]{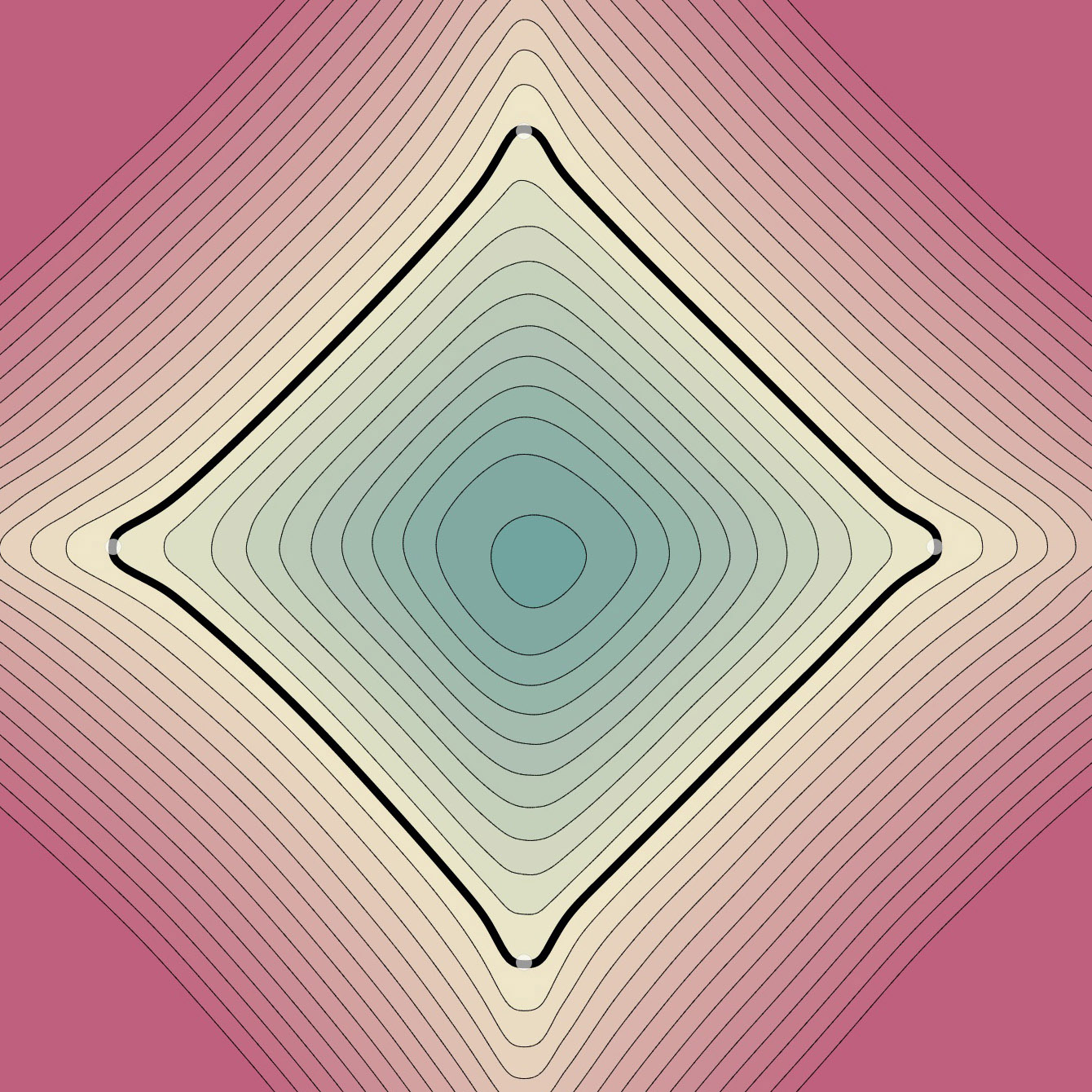} &
         \includegraphics[width=0.3\columnwidth]{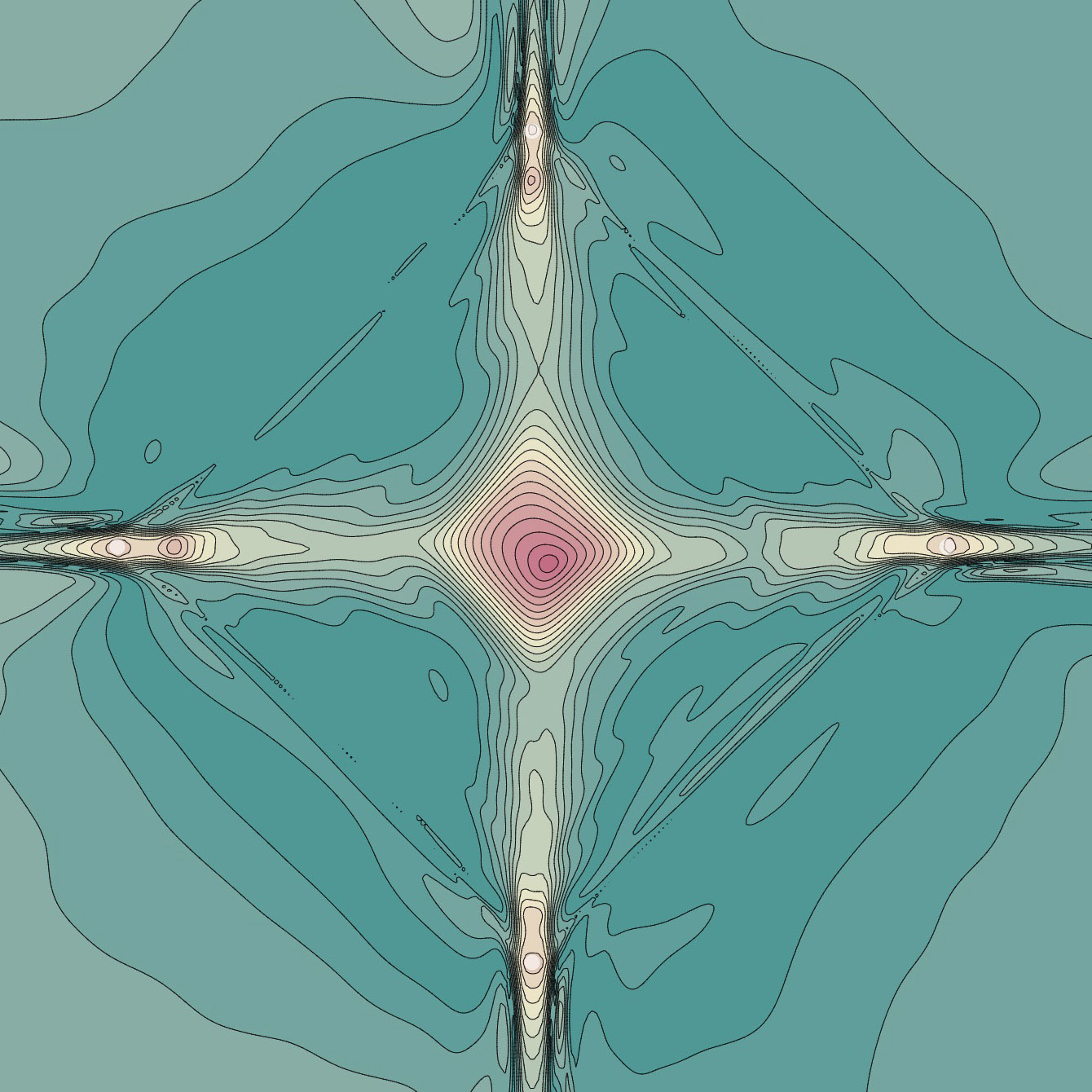} \\
         \includegraphics[width=0.3\columnwidth]{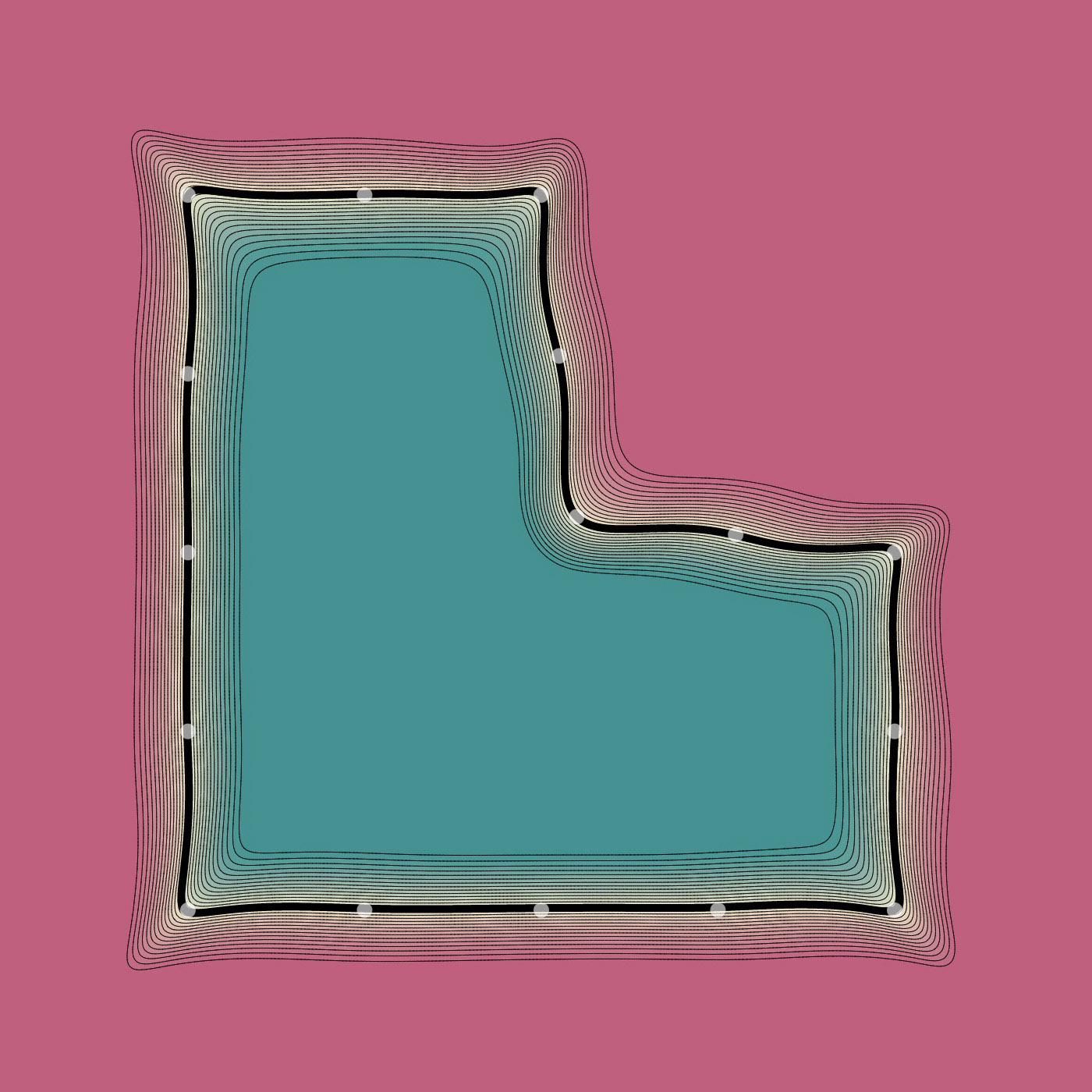} &
         \includegraphics[width=0.3\columnwidth]{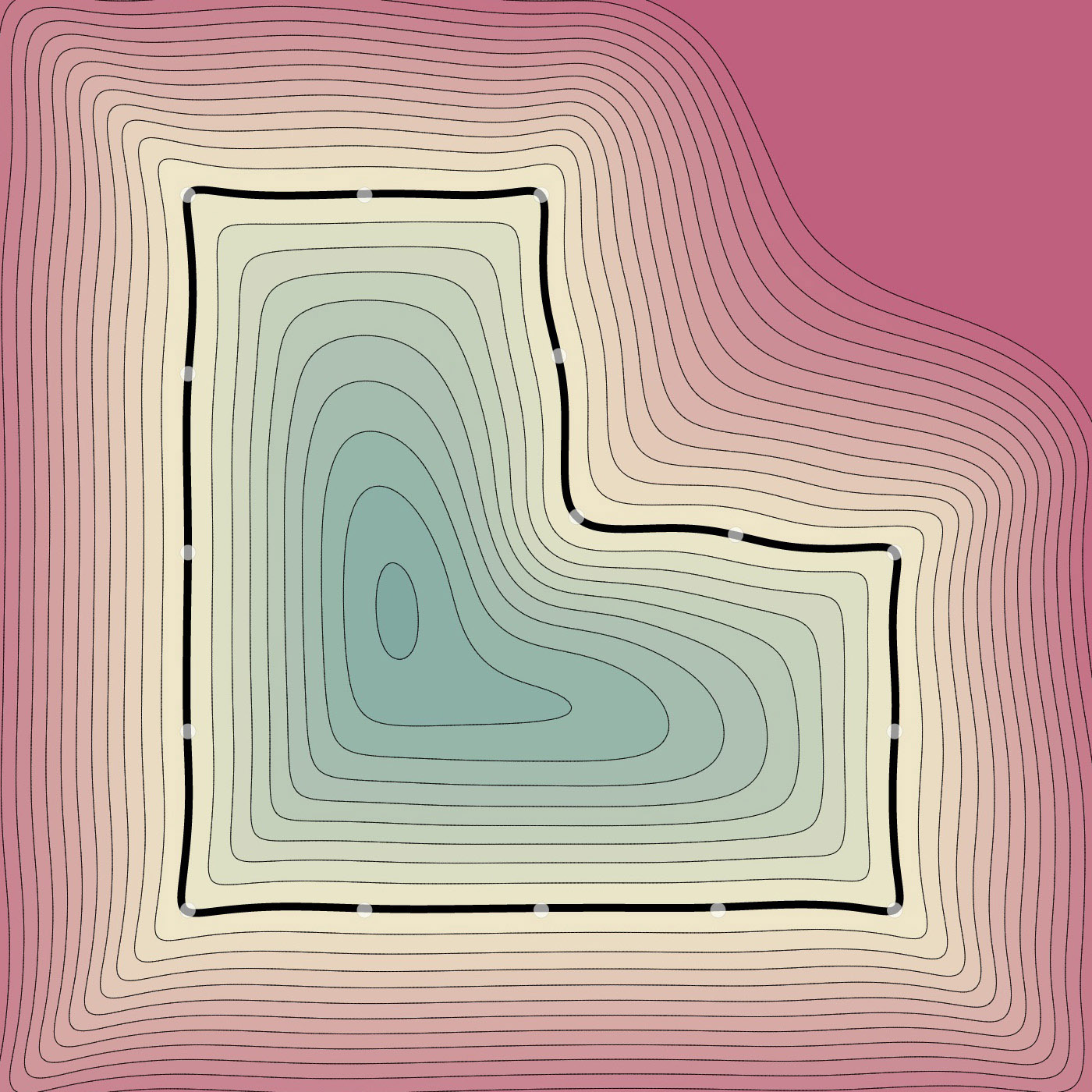} &
         \includegraphics[width=0.3\columnwidth]{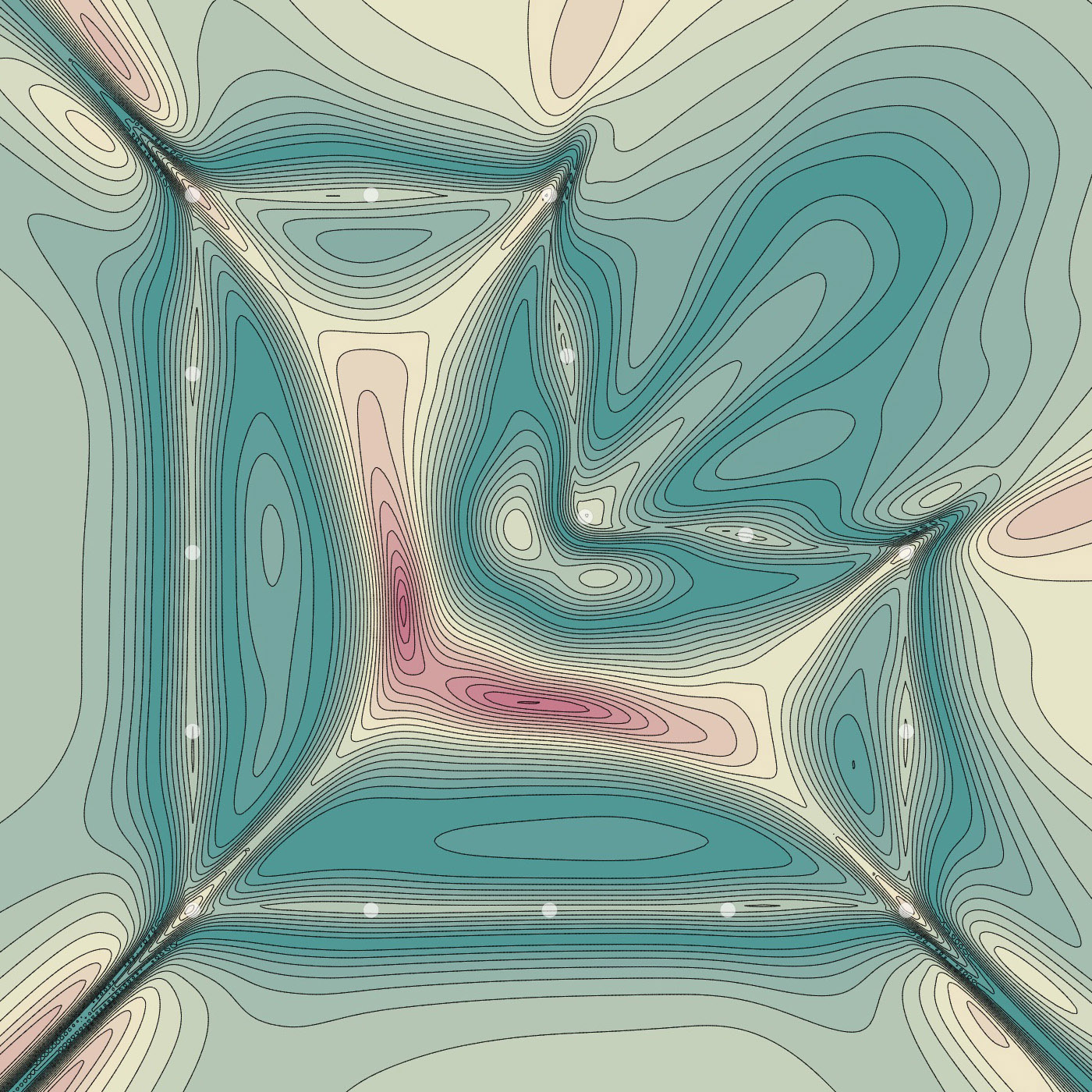} \\
         $u_\eps$ & $w_\eps$ & $\abs{\norm{\nabla w_\eps}-1}$
    \end{tabular}
    \caption{We show the deviation of the gradient norm of $w_\eps$ from $1$ (right), for three different 2D point clouds. Note the deviation from unit gradient near the sample points and medial axis. }
    \label{fig:grads}
\end{figure}

\subsection{Surface reconstruction benchmark} 
\label{ss:surface_reconstruction_benchmark}
We evaluated our loss on the dataset in \cite{williams2019deep}, and compared to relevant baselines. This dataset consists of noisy range scans, each including a point cloud $\gX$ with corresponding normals $\vn$. The data set contains surfaces with complex geometry and topology. Since normals are available we use in this case the loss in \eqref{e:loss} with $\gN_{\text{intr}}$ from \eqref{e:N_intr}. We note that DGP already established itself as superior to many classical surface reconstruction methods. Furthermore, the results of the baselines are taken from NSP that performed parameter sweep for all methods (except itself, which is not an INR method per se) and chose individual best parameter for each model; we used the \emph{same} set of parameters for all the models, \ie, $\lambda=\mu=10$, and trained for $100k$ iterations. Results are summarized in Table \ref{tab:dgp}. Note that we achieve best reconstruction error in $4$ out of $5$ cases for Chamfer distance, and $3$ out of $5$ in Hausdorff distance. 

\begin{figure}
    \centering
    \begin{tabular}{@{\hskip0pt}c@{\hskip0pt}c@{\hskip0pt}c@{\hskip0pt}c@{\hskip0pt}}
        \includegraphics[width=0.24\columnwidth]{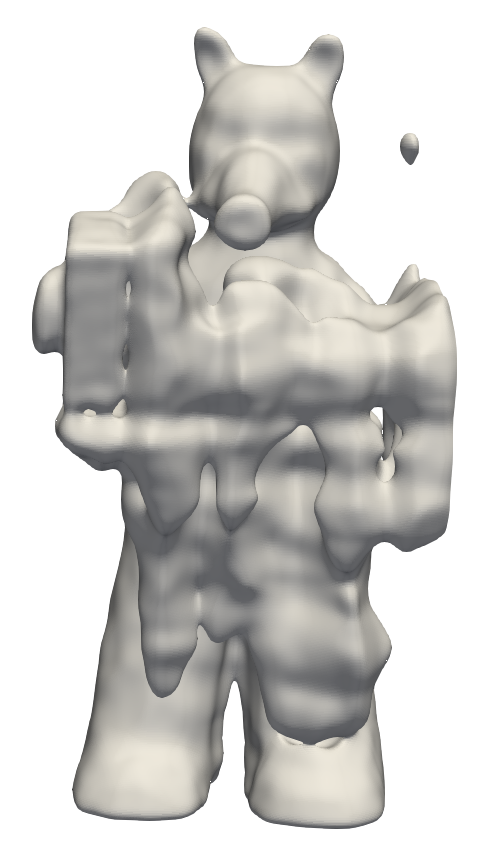} & 
        \includegraphics[width=0.24\columnwidth]{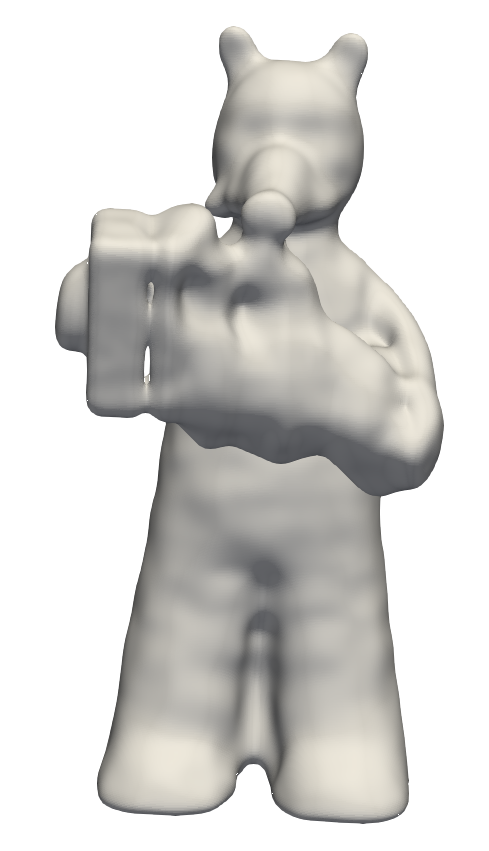} & 
        \includegraphics[width=0.24\columnwidth]{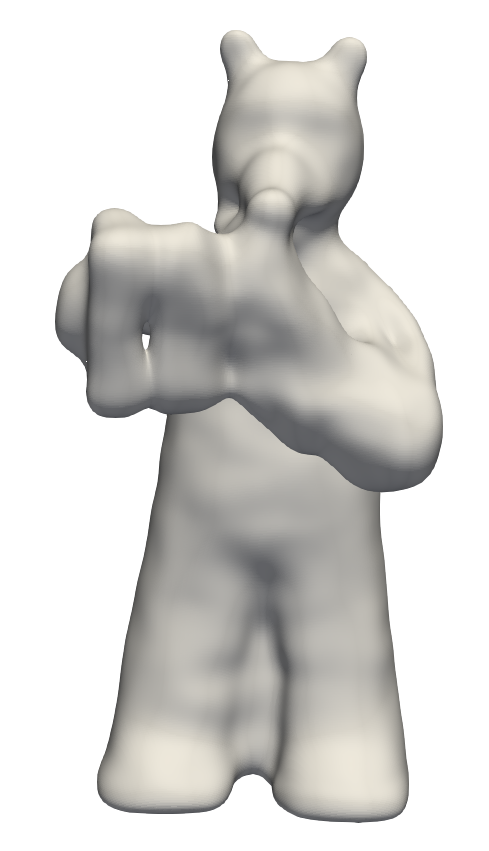} & 
        \includegraphics[width=0.24\columnwidth]{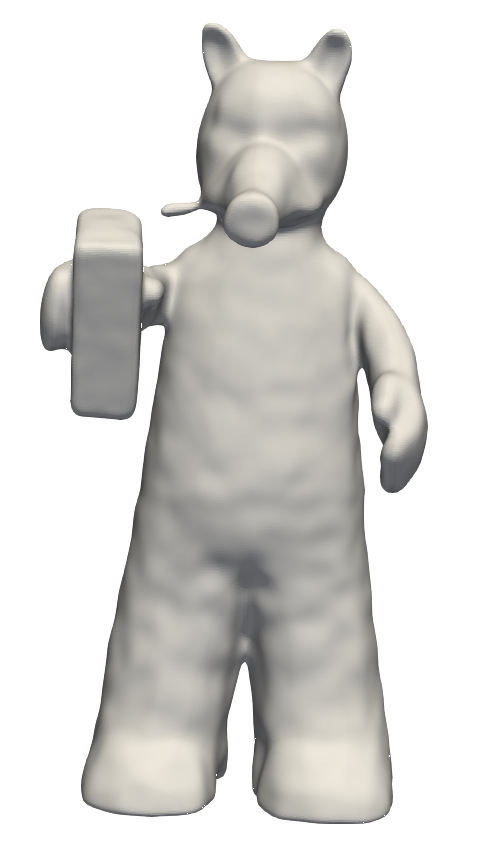} \vspace{-10pt} \\
        {\scriptsize(a)} & {\scriptsize(b)} & {\scriptsize(c)} & {\scriptsize(d)} \vspace{-8pt}
    \end{tabular}
    \caption{Reconstruction with no normals and Fourier features. (a-c) is IGR+FF, note the extraneous parts; (d) is PHASE+FF, removal of extraneous parts is attributed to the minimal perimeter. \vspace{-10pt}}
    \label{fig:ff}
\end{figure}

\subsection{Learning from points clouds and Fourier features}
In this section we work with the surface reconstruction benchmark (see Section \ref{ss:surface_reconstruction_benchmark}) but this time in the more challenging scenario of using only point cloud data (without normals). We compared PHASE with IGR together with high frequency learning that allow fast learning of details; we used Fourier features as described in Section \ref{s:implementation_details} with $k=6$, and trained for $10k$ iterations. Similarly to previous work \cite{williams2020neural}, we notice that in the point cloud only case the high frequency method possess an inductive bias that tends to introduce extraneous surface parts. For example, Figure \ref{fig:ff} (a), (b), and (c) show the results of IGR training with Fourier features (FF); (d) shows the result of training PHASE with the same Fourier feature map. We attribute the reduction in the extraneous parts to the minimal surface perimeter property of our loss. Since this is a point only case, we trained \eqref{e:loss} with \eqref{e:N_unit}; we note that in this case high $\mu$ values cause some instability, and we conducted the experiment with $\lambda=10$, and $\mu=0.5$; we present more results including also failure cases in the supplementary. Table \ref{tab:ff} provides the quantitative success of PHASE and IGR with Fourier features on this benchmark; for IGR we did a parameter sweep and chose the parameters that achieved optimal $d_C, d_H$ on the Gargoyle model. Note that PHASE is on par with some of the methods that use normal data in Section \ref{ss:surface_reconstruction_benchmark}.

\begin{table}[]\scriptsize	
    \centering
    \begin{tabular}{c|c|c|c|c|c} 
               \multicolumn{2}{c}{} & \multicolumn{2}{|c|}{Ground Truth}  & 
               \multicolumn{2}{c}{Scans}\\ \hline 
         Model & Method & $d_C$ & $d_H$ & $d_C^\too$ & $d_H^\too$  \\  \hline
         \multirow{2}{*}{Anchor}  
          & IGR+FF & 0.72 & 9.48 & 0.24 & 8.89 \\
          & PHASE+FF & 0.29 & 7.43 & 0.09 & 1.49 \\ \hline
         \multirow{2}{*}{Daratech}  
           & IGR+FF & 2.48 & 19.6 & 0.74 & 4.23 \\
          & PHASE+FF & 0.35 & 7.24 & 0.08 & 1.21 \\ \hline
         \multirow{2}{*}{DC}  
         & IGR+FF & 0.86 & 10.32 & 0.28 & 3.98 \\
          & PHASE+FF & 0.19 & 4.65 & 0.05 & 2.78 \\ \hline
         \multirow{2}{*}{Gargoyle}  
         & IGR+FF & 0.26 & 5.24 & 0.18 & 2.93 \\
          & PHASE+FF & 0.17 & 4.79 & 0.07 & 1.58 \\ \hline
         \multirow{2}{*}{Lord Quas}  
         & IGR+FF & 0.49 & 10.71 & 0.14 & 3.71 \\
          & PHASE+FF & 0.11 & 0.71 & 0.05 & 0.74 \\ \hline
    \end{tabular}
    \caption{Using Fourier features for fast training on the benchmark of \cite{williams2019deep} using only point data (no normals).\vspace{-8pt} }
    \label{tab:ff}
\end{table}

\subsection{Large point clouds}
In this experiment we trained on point clouds and normals extracted from large models taken from the Stanford 3D Scanning Repository (Source: Stanford University Computer Graphics Laboratory). The point clouds consist of $0.5m$-$14m$ vertices, and were trained for $100k$ iteration of $1k$ batches with Fourier features $k=6$. Figure \ref{fig:stanford} shows qualitative result of these reconstructions. Notably, the largest model, "Lucy" (left, $14m$ points) was trained for only $7$ epochs and still presents relatively high level of details and no visible artifacts. We used the same parameters as in Section \ref{ss:surface_reconstruction_benchmark}, \ie, $\lambda=\mu=10$.  

In Figure \ref{fig:room} we show the PHASE reconstruction of a room scene point cloud from \cite{sitzmann2020implicit} with roughly $10m$ points. In this experiment we have followed their set-up and used an MLP with $5$ layers of $1024$ neurons each. We have trained the loss with $\lambda=10$, $\mu=1$, for $100k$ iterations and batch size of $15k$, and Fourier features $k=6$.  

\begin{figure}
    \centering
    \includegraphics[width=\columnwidth]{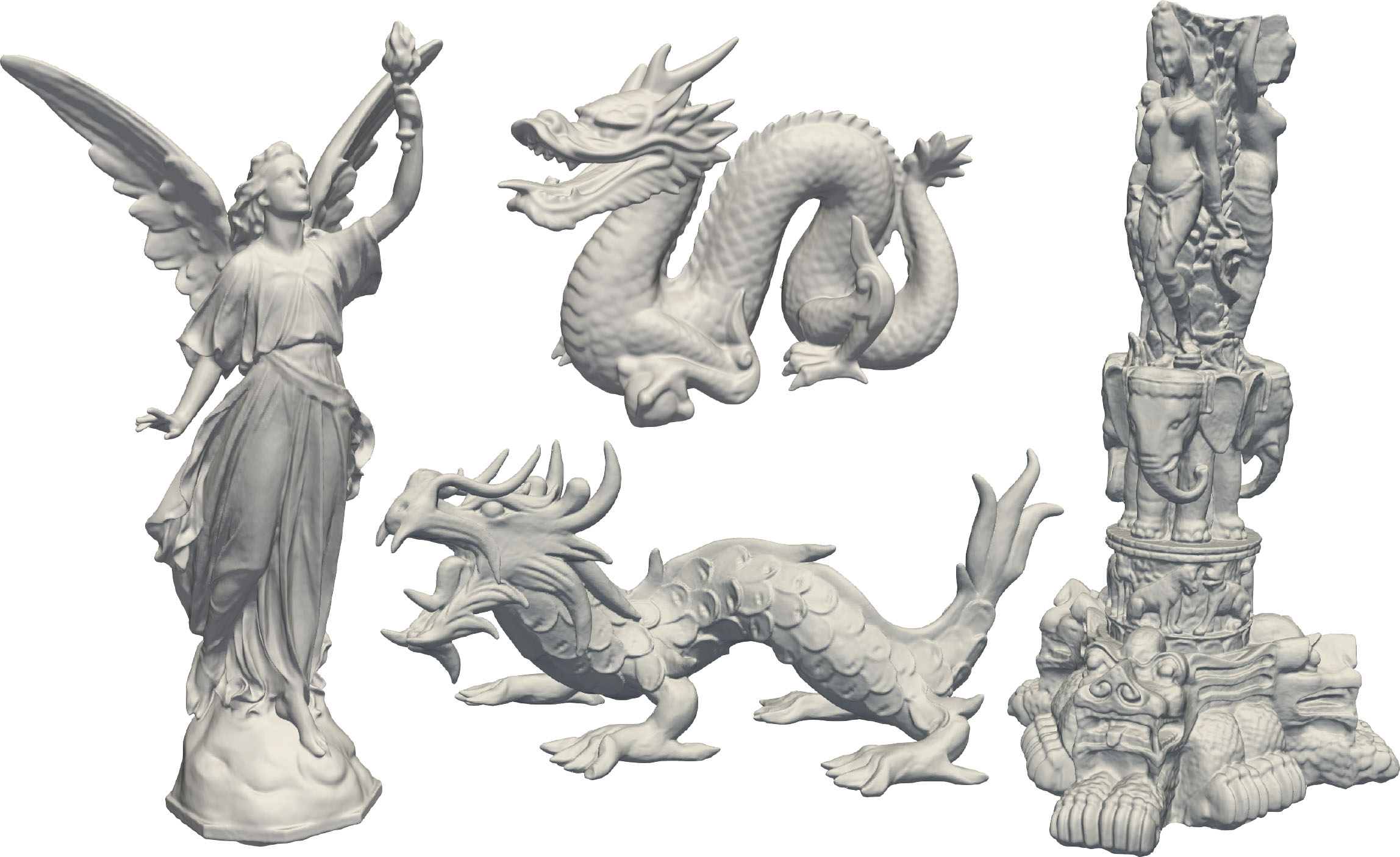}\vspace{-15pt}
    \caption{Reconstruction of large models from the Stanford 3D Scanning Repository. }
    \label{fig:stanford}
\end{figure}

\begin{figure}[t]
    \centering
    \includegraphics[width=1.0\columnwidth]{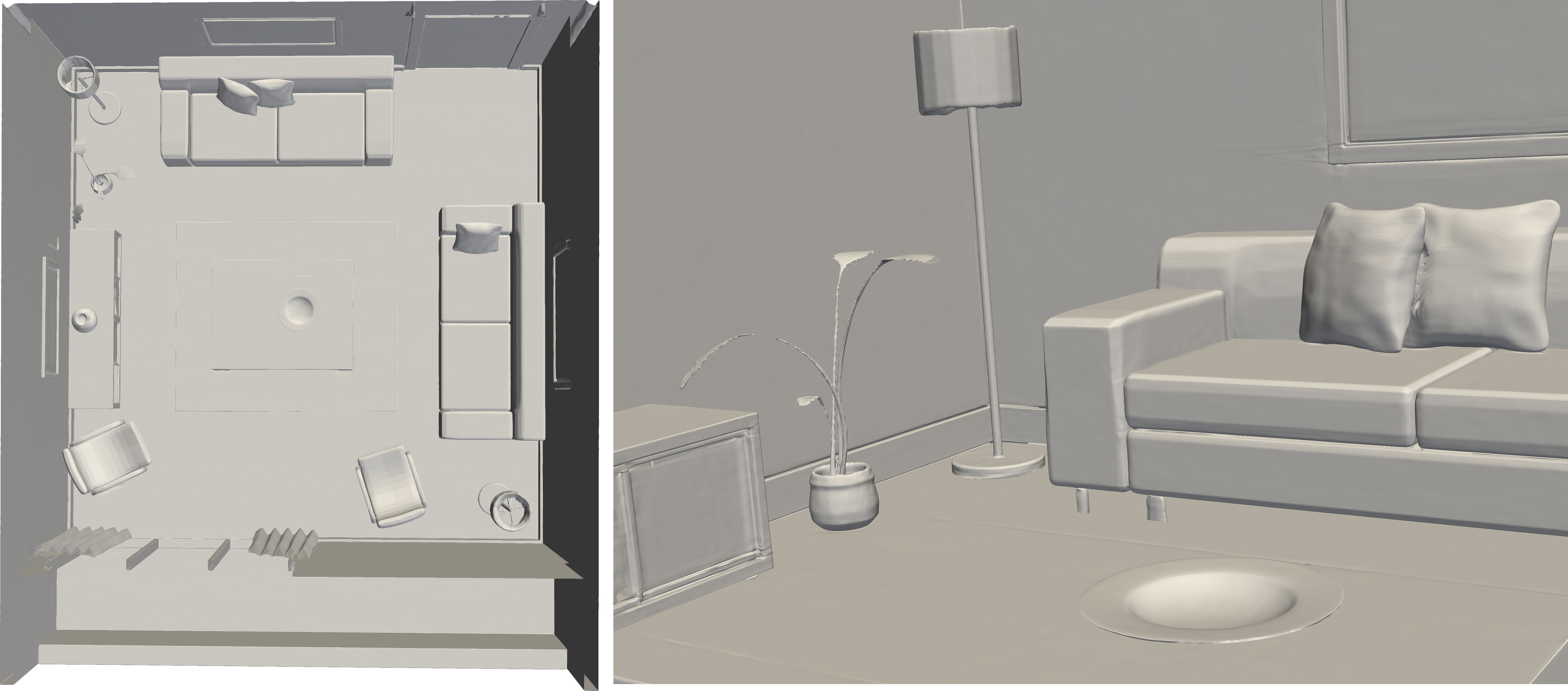} 
    \caption{PHASE reconstruction of the room scene from SIREN \cite{sitzmann2020implicit}.}
    \label{fig:room}
\end{figure}

\section{Conclusions}
We presented a new loss for surface reconstruction from raw data, suitable for training implicit neural representations. This new loss learns a signed density function and is motivated from a mathematical physics model of fluids equilibrium in a container. This theory provides a principled tool to analyze the limit behavior of the loss minimizers. Most notably, convergence to occupancy function with minimal surface perimeter. By choosing a specific energy potential $W$  we show that the log transform of the learned density approximates the signed distance function. Experiments with this loss demonstrate its usefulness for learning INRs from raw data. 
Interesting future work is to expand the analysis of the PHASE loss to include other reconstruction terms (\eg, equations \ref{e:N_intr} and \ref{e:N_unit}); another interesting venue is to connect this loss to prior losses, possibly better understanding their limit behaviour and bias. A more ambitious goal would be to develop a general mathematical theory for INR losses, with well understood bias that can be tuned and even learned from data. 

\bibliography{bibliography}
\bibliographystyle{icml2021}

\cleardoublepage

\section{$\Gamma$-convergence} 
\begin{proof}[Proof (Theorem \ref{thm:main_gamma})]
We will adapt the proof of Theorem 13.6 in \cite{rindler2018calculus} to the surface reconstruction loss. We want to prove $\Gamma$-convergence of $\eps^{-1/2}\gF_\eps$ to $\gF_0$. Note that $\eps^{-1/2}\gF_\eps$ is equivalent to the functional 
\begin{equation}\label{e:F_eps_PROOF}
    \gF_{\epsilon}(u)=\lambda\gL(u)+\begin{cases} \int_\Omega \eps\norm{\nabla u}^2 + \frac{1}{\eps}W(u) & u\in W^{1,2}(\Omega)\\ 
    +\infty & \text{otherwise}
    \end{cases}
\end{equation}
where now $\lambda(\eps) = \eps^{-1}\tilde{\lambda}(\eps^2)$,  where $\tilde{\lambda}$ represents the dependence in \eqref{e:F_eps} (in the main paper). So we will prove $\Gamma$-convegence of this $\gF_\eps$ to $\gF_0$, where $\lambda\too\infty$ and $\lambda\sqrt{\eps}\too 0$, as $\eps \dtoo 0$. The proof of $\Gamma$-convergence requires showing the \emph{$\liminf$} and \emph{recovery} properties from Section \ref{ss:gamma_convergence} in the main paper. 
Let us denote:
\begin{align*}
  \gE_\eps(u) &= \begin{cases} \int_\Omega \eps\norm{\nabla u}^2 + \frac{1}{\eps}W(u) & u\in W^{1,2}(\Omega)\\ 
    +\infty & \text{otherwise}
    \end{cases} \\
\gE_0(u) &= \begin{cases} \sigma_0 \per_\Omega(\gI) & u\in \BV(\Omega;\set{-1,1}) \\
    +\infty & \text{otherwise}
    \end{cases}
\end{align*}

For the reader's convenience we also repeat the definition of the limit functions $\gF_0$:
\begin{equation}
    \gF_0(u)=  \begin{cases} \sigma_0 \per_\Omega(\gI) & u\in \BV(\Omega,\set{-1,1}), \\&  \text{and } \gL(u)=0 \\ 
    +\infty & \text{otherwise}
    \end{cases}
\end{equation}

\textbf{\textbf{\textrm{Lim inf}} part.}
In this part we need to consider $u_\eps \too u$ in $L^1(\Omega)$; we abuse notation a bit and let $\eps\dtoo 0$ denote some particular sequence $\eps_k\too 0$ as $k\too \infty$. We need to show that $\liminf_{\eps\dtoo 0} \gF_\eps(u_\eps) \geq \gF_0(u)$. If $\liminf_{\eps\dtoo 0} \gF_\eps(u_\eps)=\infty$ the statement holds trivially, therefore we assume $\liminf_{\eps\dtoo 0} \gF_\eps(u_\eps)<\infty$. 

In the proof of Theorem 13.6 in \citet{rindler2018calculus} it is shown that $\int_\Omega W(u(\vx))d\vx = 0$ and consequently $u(\vx)\in\set{-1,+1}$ almost everywhere, \ie, \eqref{e:u_0}  (in the main paper) holds for $u$. Furthermore, it is shown that  
\begin{equation}\label{e:liminf_E}
  \liminf_{\eps\dtoo 0} \gE_\eps(u_\eps) \geq \gE_0(u),   
\end{equation}
where $\sigma_0$ is some constant depending on $W$ alone:
\begin{equation}
\sigma_0 = 2\int_{-1}^1 \sqrt{W(s)}ds.
\end{equation}
Since $\gE_0(u)<\infty$, $u\in \BV(\Omega;\set{-1,1})$ (this can be seen directly from the definitions in equations \ref{e:def_BV} and \ref{e:per} in the main paper).

Now, for our reconstruction loss, since $u_\eps\too u$ in $L^1(\Omega)$ we have that for every $\vx\in\gX$, $\abs{\int_{B_\vx}u_\eps} \too \abs{\int_{B_\vx}u}$, as $\eps\dtoo 0$. By Fatou's lemma applied for the functions $\vx\mapsto \abs{\int_{B_\vx}u_\eps}$ and the limit function $\vx\mapsto \abs{\int_{B_\vx}u}$, and the fact that $\liminf_{\eps\dtoo 0}\gF_\eps(u_\eps)<\infty$:
\begin{align*}
  \gL(u) &= \E_\vx \abs{\int_{B_\vx}u} \leq \liminf_{\eps\dtoo 0} \E_\vx \abs{\int_{B_\vx}u_\eps}\\ & = 
  \liminf_{\eps\dtoo 0} \gL(u_\eps) \leq \liminf_{\eps\dtoo 0}\frac{1}{\lambda}\gF_\eps(u_\eps)  =0.
\end{align*}
where the last equality is due to $\lambda\too \infty$ as $\eps\dtoo 0$. This means that the limit function $u$ satisfies the reconstruction constraints perfectly, or in other words that the reconstructed surface $\gS$ passes through all the balls $B_\vx$, $\vx\in \gX$, except possibly a subset of $\gX$ of measure zero. However, since $\abs{\int_{B_\vx}u}$ is continuous as a function of $\vx$ this is true for all balls $B_\vx$, $\vx\in\gX$. 

In particular $\gE_0(u) = \gF_0(u)$. Now incorporating this with \eqref{e:liminf_E} we get 
\begin{align*}
  \liminf_{\eps\dtoo 0} \gF_\eps(u_\eps) & = \liminf_{\eps\dtoo 0}  \parr{\lambda\gL(u_\eps) + \gE_\eps(u_\eps)} \\
  &\geq \liminf_{\eps\dtoo 0}  \gE_\eps(u_\eps) \\ &\geq \gE_0(u)  = \gF_0(u)  
\end{align*}
as required.

\textbf{Recovery sequence part.}
In this part we need to consider an arbitrary $u\in L^1(\Omega)$ and find a sequence $u_\eps\in L^1(\Omega)$ so that $u_\eps\too u$ in $L^1(\Omega)$ and $\lim_{\eps\dtoo 0}\gF_\eps(u_\eps)=\gF_0(u)$.

Let $u\in L^1(\Omega)$ be arbitrary. If $u\notin \BV(\Omega;\set{-1,1})$ or $\gL(u)>0$ then $\gF_0(u)=\infty$ and there is no need to construct a recovery sequence in this case (see, \eg, Theorem \ref{thm:gamma_conv_of_mins} where no recovery sequence for such cases is needed). So we assume $\gF_0(u)<\infty$, and $u$ of the form $u = -\one_\gI + \one_{\Omega\setminus \gI}$, where $\gI$ is defined as in \eqref{e:I_and_O}.

Next, note that if $\int_\Omega u \in \set{-|\Omega|,|\Omega|}$ then $\gL(u)>0$ and again $\gF_0(u)=\infty$. Therefore we can assume $\int_\Omega u \in (-|\Omega|,|\Omega|)$. In this case, Theorem 13.6 in \citet{rindler2018calculus} shows the existence of a recovery sequence of functions $u_\eps$ so that $u_\eps\too u$ in $L^1(\Omega)$ and $\lim_{\eps\dtoo 0}\gE_\eps(u_\eps)=\gE_0(u)$. The main observation in this part is that $u_\eps$ is a recovery sequence also for our surface reconstruction functionals $\gF_\eps$ and $\gF_0$. 

To show that $u_\eps$ is a recovery sequence also in our settings it is enough to show that $\lambda\gL(u_\eps)\too 0$ as $\eps\dtoo 0$. Indeed, if this is the case, 
\begin{align*}
  \lim_{\eps \dtoo 0}\gF_\eps(u_\eps) &= \lim_{\eps \dtoo 0}\parr{\gE_\eps(u_\eps) + \lambda\gL(u_\eps)} \\ &=\gE_0(u)=\gF_0(u)  
\end{align*}

To show that $\lambda\gL(u_\eps)\too 0$ we need to use a bit of extra information on $u_\eps$: $u_\eps$ is constructed to approximate $w=-\one_{G\cap \Omega}+\one_{\Omega\setminus G}$, that is 
$$\int_\Omega |u_\eps - w| \too 0,$$ 
where $G\subset \Real^d$ is open, bounded with smooth boundary, and $\abs{(G\,\Delta\,\gI)\cap \Omega}$ can be made arbitrary small (see Lemma 13.7 in \citet{rindler2018calculus}, or Lemma 1 in \citet{modica1987gradient}). This means that for arbitrary $\delta>0$ we can choose $G$ so that 
$$\int_\Omega \abs{u-w} \leq \abs{(G\,\Delta\,\gI)\cap \Omega} \leq \delta$$
Remember that $\gL(u)=0$ and using this last inequality we get that for arbitrary $\vx\in\gX$, 
\begin{equation}\label{e:int_B}
    \begin{aligned}
    \abs{\int_{B_\vx} u_\eps} &=  \abs{\int_{B_\vx} u_\eps - \int_{B_\vx} w} + \abs{\int_{B_\vx} w - \int_{B_\vx} u} \\ &\leq \int_\Omega \abs{u_\eps-w} + \int_\Omega \abs{w-u} \leq c\sqrt{\eps} 
    \end{aligned}
\end{equation}
where $c>0$ is some constant, and the last inequality is due to the fact that we can make the choice $\delta=\sqrt{\eps}$ and the following bound shown in the proof of Theorem 13.6 in \citet{rindler2018calculus}:
$$\int_\Omega \abs{u_\eps-w}\leq 4\sqrt{\eps}\sup_{-2\sqrt{\eps}\leq t\leq 2\sqrt{\eps}}\gH^{d-1}(\gS_t\cap \Omega),$$
where $\gS_t=\set{\vx\in\Real^d \ \vert \ d_\gS(\vx)=t}$,  $d_\gS$ is the signed distance function defined in \eqref{e:sdf}, $\gS$ is defined as in \eqref{e:I_and_O} for $w$ , and Lemma 13.9 in \citet{rindler2018calculus} shows that the Hausdorff measure of $\gS_t\cap\Omega$ converges to that of $\partial\gI\cap\Omega$ as $\eps\dtoo 0$, and therefore is bounded. 

\Eqref{e:int_B} implies that
$$\lambda\gL(u_\eps)\leq c \lambda \sqrt{\eps}.$$
Lastly, remember that $\lambda \sqrt{\eps} \too 0$ and therefore $u_\eps$ is a recovery sequence as desired. 

\end{proof}


\section{Distance functions}

\begin{reptheorem}{thm:laplace_u}
Let $u_\eps\in W^{1,2}(\Omega)$ be a (local) minimizer of $\gF_\eps$, and $O\subset\Omega \setminus \cup_{\vx\in\gX} B_\vx$ a set where $u_\eps\ne 0$. Then, $u_\eps$ is smooth in the classical sense in $O$ and satisfies 
\begin{equation}\label{e:linear_}
-\eps\Delta u_\eps + u_\eps -\sign(u_\eps) = 0.
\end{equation}
\end{reptheorem}
\begin{proof}
We start by applying the Euler-Largrange (EL) conditions (see \eg, Theorem 3.1 in \cite{rindler2018calculus}) for a minimizer $u$ of $\gF_\eps$ in $O$. Let $$f(x,v,\vv)=\eps\norm{\vv}^2 + v^2 - 2\abs{v} + 1,$$
be our integrand. That is $\gF_\eps(u) = \int_\Omega f(x,u,\nabla u)$. The EL conditions are:
 $$- \text{div}\brac{ \nabla_\vv f(x,u,\nabla u) } + \frac{d}{dv} f(x,u,\nabla u) = 0.$$
Plugging our $f$ and noting that $W'(s)=2s-2\sign(s)$, which is differentiable in $O$, we get
$$-2\eps \Delta u + 2u -2\sign(u) = 0.$$ 
In particular, \eqref{e:linear_} will be satisfied in the weak sense with any test function $\psi\in C_c^\infty(O)$.
%
Second, regularity results for elliptic operators (\eg, Corollary 8.11 in \cite{gilbarg2015elliptic}) show that $u$ is smooth in $O$ and satisfies \eqref{e:linear_} in the classical sense. \end{proof}

\begin{reptheorem}{thm:visc_eikonal}
    Let $O\subset\Omega$ be a domain as defined in Theorem \ref{thm:laplace_u}. Then, over $O$, $w_\eps$ satisfies
    \begin{equation}\label{e:w_eps_viscosity_eikonal_}
    -\sqrt{\eps}\Delta w_\eps + \sign(u_\eps) (\norm{\nabla w_\eps}^2-1) = 0
    \end{equation}
\end{reptheorem}
\begin{proof}
For brevity, we denote $u=u_\eps$, and assume $u>0$ in $O$. Then, \eqref{e:w} (in the main paper) in this case is $$w = -\sqrt{\eps}\log(1-u).$$
Let $\vx=(x_1,x_2,\ldots,x_d)$. Now,
\begin{align*}
&\frac{\partial w}{\partial x_i} = \sqrt{\eps}\frac{1}{1-u}\frac{\partial u}{\partial x_i} \\&\frac{\partial^2 w}{\partial x_i^2}=\sqrt{\eps}\frac{1}{(1-u)^2}\brac{\frac{\partial u}{\partial x_i}}^2 + \sqrt{\eps}\frac{1}{1-u}\frac{\partial^2 u}{\partial x_i^2} 
\end{align*}
Plugging in the l.h.s.~of \eqref{e:w_eps_viscosity_eikonal_} we get
\begin{align*}
\frac{-\eps}{(1-u)^2}\norm{\nabla u}^2 + \frac{-\eps}{1-u}\Delta u+\frac{\eps}{(1-u)^2}\norm{\nabla u}^2-1
\end{align*}
and this term vanishes in view of \eqref{e:linear_}.

If $u<0$ in $O$, then \eqref{e:w} (in the main paper) in this case is $$w = \sqrt{\eps}\log(1+u).$$
Similar to before:
\begin{align*}
&\frac{\partial w}{\partial x_i} = \sqrt{\eps}\frac{1}{1+u}\frac{\partial u}{\partial x_i} \\&\frac{\partial^2 w}{\partial x_i^2}=-\sqrt{\eps}\frac{1}{(1+u)^2}\brac{\frac{\partial u}{\partial x_i}}^2 + \sqrt{\eps}\frac{1}{1+u}\frac{\partial^2 u}{\partial x_i^2} 
\end{align*}
Again, plugging in the l.h.s.~of \eqref{e:w_eps_viscosity_eikonal_} we get
$$\frac{\eps}{(1+u)^2}\norm{\nabla u}^2 + \frac{-\eps}{1+u}\Delta u  + 1 - \frac{\eps}{(1+u)^2}\norm{\nabla u}^2$$
that again vanishes in view of \eqref{e:linear_}.

\end{proof}

\begin{reptheorem}{thm:var}
    Let $O$ be an open set, and $u_\eps$ a solution to \eqref{e:linear_} in $O$, $u_\eps=0$ on $\partial O$ and $u_\eps\ne 0$ in $O$. Then $ w_\eps \too  \sign(u_\eps) d_{\partial O}$ pointwise uniformly in any compact subset $O\cup\partial O$. 
\end{reptheorem}
Note that $\sign(u_\eps)$ is well defined in $O$ since we assume that $O$ does not vanish in $O$. 
\begin{proof}
First, assume $\sign(u_\eps)>0$ in $O$. Then, $u_\eps$ satisfies:
\begin{equation*}
\begin{aligned}
-\eps\Delta u_\eps + u_\eps - 1 = 0 & \qquad \text{in } O\\
u_\eps = 0 & \qquad \text{in } \partial O
\end{aligned}
\end{equation*}
Now the change of variables $v_\eps=1-u_\eps$ leads to 
\begin{equation}\label{e:var_linear}
\begin{aligned}
\frac{1}{2}\Delta v_\eps  = \frac{1}{2\eps}v_\eps  & \qquad \text{in } \Omega\\
v_\eps = 1 & \qquad \text{in } \partial\Omega
\end{aligned}
\end{equation}
Therefore, Theorem 2.3 in \cite{varadhan1967behavior} with $\lambda=\frac{1}{2\eps}$ now implies 
$$-\sqrt{\eps}\log(v_\eps)=-\sqrt{\eps}\log(1-u_\eps)\xrightarrow{\eps\too 0} d_{\partial O}$$
uniformly in compact subsets of $O\cup\partial O$. 

In the case $\sign(u_\eps)<0$ in $O$, $u_\eps$ satisfies:
\begin{equation*}
\begin{aligned}
-\eps\Delta u_\eps + u_\eps + 1 = 0 & \qquad \text{in } O\\
u_\eps = 0 & \qquad \text{in } \partial O
\end{aligned}
\end{equation*}
The change of coordinates $v_\eps = u_\eps+1$ now leads again to $v_\eps$ satisfying \eqref{e:var_linear} and invoking \cite{varadhan1967behavior} again implies that $$\sqrt{\eps}\log(v_\eps) = \sqrt{\eps}\log(1+u_\eps)\xrightarrow{\eps\too 0} -d_{\partial O}$$ 
uniformly in compact subsets of $O\cup\partial O$. Putting the two cases together and comparing to \eqref{e:w} (in the main paper) proves the theorem.
\end{proof}

\section{Networks in Sobolev spaces}
Let $f:\Real^{d}\times\Real^p\too\Real$ be a multilayer perceptron (MLP) with ReLU activation. That is, $f(\vx;\theta)$ is composed of layers of the form $\vz = \sigma(\mW\vy + \vb)$, where $\mW,\vb$ are the parameters, collectively defining $\theta\in\Real^p$, and $\sigma(s)=\max\set{0,s}$ is the ReLU applied entry-wise. We consider the functions $f(\cdot;\theta)$, for a fixed $\theta$: these are piecewise linear and continuous. In fact, each linear piece $L_k=f\vert_{\Omega_k}$ is defined over a polytope $\Omega_k\subset \Real^d$. For the analysis of the loss functions we discuss in this paper we require a complete function space that contains these neural functions and allow discussing their "derivatives" and convergence. The most natural such space is  $W^{1,p}(\Omega)$, $p\in [1,\infty)$, the Sobolev space of all $L^p(\Omega)$ functions with first weak derivatives also in $L^p(\Omega)$. We note that MLP with Softplus activation is smooth in the classical sense hence in particular belongs to $W^{1,p}(\Omega)$. 

\begin{replemma}{lem:W1}
The functions $f(\cdot;\theta)$, where $f$ is an MLP with ReLU activation belongs to $W^{1,p}(\Omega)$, for all $p\in [1,\infty]$, and $\Omega$ Lipschitz domain. 
\end{replemma}
%
\begin{proof}
Denote $f(\vx):=f(\vx;\theta)$. To show that $f\in W^{1,p}(\Omega)$ we need to show two things: $f\in L^p(\Omega)$ (which is clear since $f$ is measurable and bounded as well as $\Omega$ is bounded), and that it has \emph{weak derivatives} in $L^p(\Omega)$, that is for every $i\in [d]$ there exists a function $g\in L^p(\Omega)$ so that 
\begin{equation}\label{e:weak_derivative}
\int_\Omega f\frac{\partial \psi}{\partial x_i} = -\int_\Omega g\psi,
\end{equation}
for all compactly supported smooth functions $\psi\in C^\infty_c(\Omega)$. Let $\Omega=\cup_k \Omega_k$ be the decomposition of $\Omega$ to the subdomains where $L_k=f\vert_{\Omega_k}$ is linear. The multivariate integration by parts formula provides:
\begin{equation}\label{e:mult_int_by_parts}
\int_{\Omega_k} f\frac{\partial \psi}{\partial x_i}  = \int_{\partial\Omega_k} f\psi \ip{\ve_i,\vn} - \int_{\Omega_k}\frac{\partial f}{\partial x_i}\psi.
\end{equation}
Therefore, a natural candidate for the weak derivative $\frac{\partial f}{ \partial x_i}$ is the piecewise constant $g$ that equals $\frac{\partial L_k}{\partial x_i}$ in the interior of each $\Omega_k$. Indeed, noticing that $\ip{\ve_i,\vn}$ flips sign when the normal $\vn$ flips sign, we see that all interior contributions of the boundary integrals $\int_{\partial\Omega_k} f\psi \ip{\ve_i,\vn}$ cancel, and since $\psi$ vanishes on the boundary $\partial\Omega$, summing \eqref{e:mult_int_by_parts} over $k$ we get \eqref{e:weak_derivative}.
\end{proof}

\begin{figure*}[t]
\centering
\includegraphics[width=\textwidth]{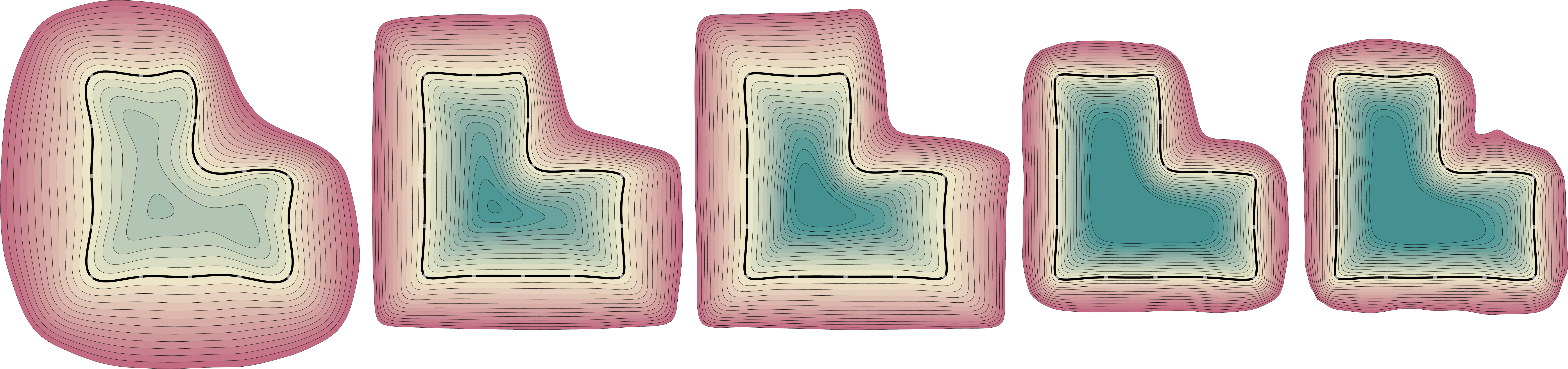}
\caption{Ablation of $\eps$. From left to right $\eps=1.0, 0.1, 0.05, 0.01, 0.005$; $\lambda=c\eps^{0.3}$ according to Theorem {\protect\ref{thm:main_gamma}}, and $c\in\set{1,10}$. }\label{fig:ablation}
\end{figure*}

\section{Experiments and implementation details}

In all the experiments we used $\eps=0.01$, and did a parameter search over $\lambda \in \set{0.2,0.3,0.5,1,10,20}$ (note that $\eps^{1/3}\approx 0.2$, and $1/3$ is in the range suggested by Theorem \ref{thm:main_gamma}), and similarly for $\mu$.  The gradients $\nabla u(\vx)$ are computed with automatic differentiation. A single training iteration with batch size $16k$ takes $0.16\text{sec}$, on an NVIDIA Quadro GP100.

\subsection{Fourier features}
 In some experiments we used Fourier features \cite{tancik2020fourfeat} as the first constant layer in the network, $\delta_k:\Real^d\too\Real^{2kd}$, where $k$ is a parameter representing the number of frequencies used. For $\vx\in\Real^d$ the Fourier feature layer is defined to be a vector $\delta_k(\vx)\in\Real^{2kd}$ with the real and imaginary parts of $\exp(\mathrm{i} 2^\omega \pi x_j)$, $\omega\in[k]$, $j\in[d]$ as entries.

\subsection{Metrics}


The distance between two point clouds $\gX_1, \gX_2$ is computed as in \cite{williams2019deep} using the standard one-sided and double-sided Chamfer and Hausdorff distances. We denote by $d^\rightarrow_C(\gX_1,\gX_2)$, $d_C(\gX_1,\gX_2)$ the one-sided and double-sided $\ell_1$ Chamfer distance; and by $d^\rightarrow_H(\gX_1,\gX_2)$, $d_H(\gX_1,\gX_2)$ the one-sided and double-sided Hausdorff distance. These are defined as follows:
\begin{align*}
    d^{\too}_C(\gX,\gY) &= \frac{1}{|\gX|}\sum_{\vx\in\gX} \min_{\vy\in\gY} \norm{\vx-\vy}_2 \\
    d_C(\gX,\gY) &= \frac{1}{2}\parr{d_C^{\too}(\gX,\gY) + d_C^{\too}(\gY,\gX)}\\
    d^{\too}_H(\gX,\gY) &= \max_{\vx\in\gX} \min_{\vy\in\gY} \norm{\vx-\vy}_2 \\
    d_H(\gX,\gY) &= \max\set{d_H^{\too}(\gX,\gY) + d_H^{\too}(\gY,\gX)},
\end{align*}
To measure the distance between a surface and a point cloud, or between two surfaces, we first sample each surface $\gS$ densely (\ie, with $10m$ uniformly random points) $\gY\subset\gS$,  and then measure the distance between the corresponding points clouds as described above.

\subsection{2D evaluation}
Figure \ref{fig:ablation} demonstrates ablation of $\eps$; we show $w_\eps$ for $\eps$ in a range of values $\eps\in\set{1.0,0.1,0.05,0.01,0.005}$, where $\lambda$ is chosen according to Theorem \ref{thm:main_gamma}, \ie, $\lambda=c\eps^{0.3}$, and we take $c\in\set{1,10}$. Note that as $\eps$ decreases $w_\eps$ is closer to a distance function. 


\begin{figure*}\scriptsize
    \centering
    \begin{tabular}{c}
         \includegraphics[width=\textwidth]{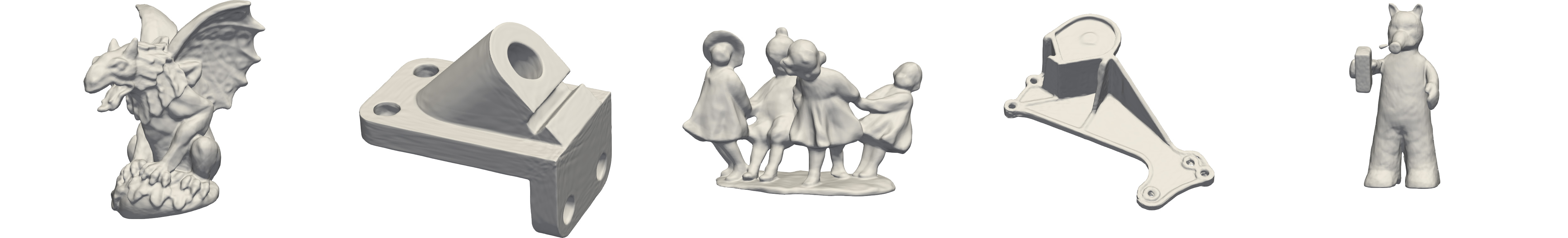} \\
         PHASE (points and normals) \\
         \includegraphics[width=\textwidth]{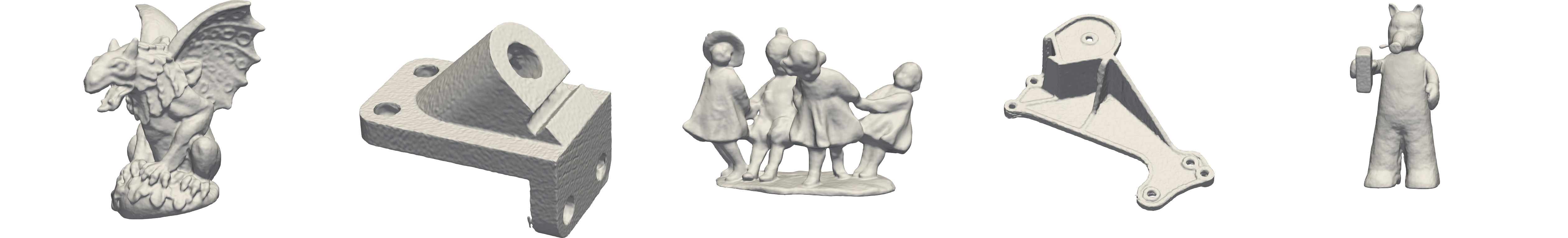} \\
         PHASE + FF (points and normals) \\
         \includegraphics[width=\textwidth]{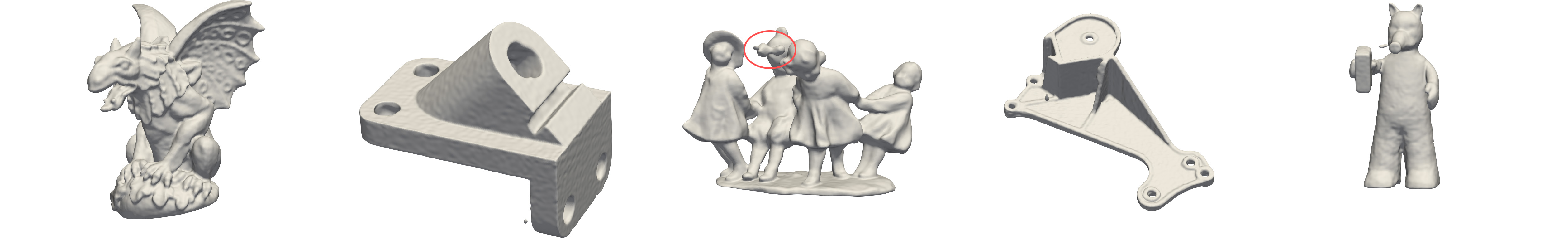}
         \\ 
         IGR + FF (points and normals) \\ \hline
         \includegraphics[width=\textwidth]{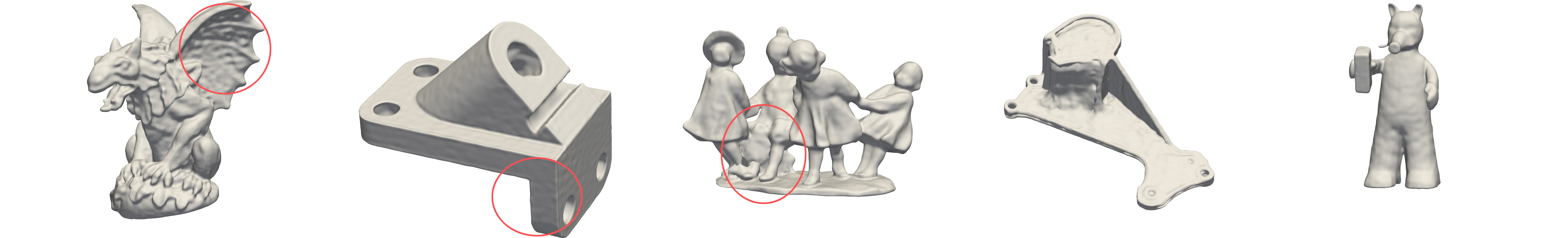}\\
         PHASE + FF (points) \\
         \includegraphics[width=\textwidth]{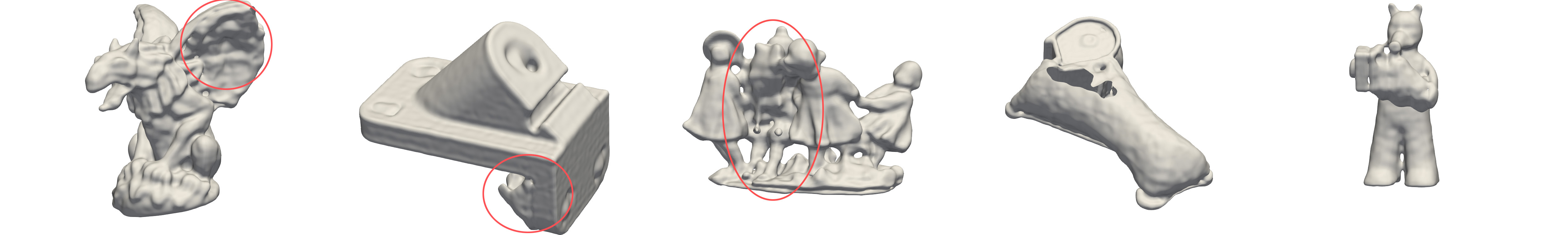}
         \\
         IGR + FF (points)  
    \end{tabular}
    \caption{Surface reconstruction benchmark \cite{williams2019deep} comparing PHASE and IGR with/without normal data and Fourier features. Note that IGR has a slight tendency to add redundant surface area that is amplified in the more challenging points-only case with Fourier features, where PHASE, due to its minimal surface perimeter property is less susceptible to high frequency artifacts. }
    \label{fig:dgp_methods}
\end{figure*}

\subsection{Surface reconstruction benchmark}
This benchmark consists of 5 models, the input (train data) to each model is a point cloud $\gX$ of size $170k-290k$, and corresponding normal data $\vn:\gX\too\gS(\Real^3)$, as well as ground-truth test point cloud $\gY$. In this experiment we used the PHASE loss in \eqref{e:loss}  (in the main paper) with the normal loss \eqref{e:N_intr} (in the main paper), and $\mu=\lambda=10$. The batch size was taken to be $16k$ and we performed $100k$ iterations, which roughly correspond to $5k-10k$ epochs. Figure \ref{fig:dgp_methods} shows qualitative results of PHASE. 

\paragraph{PHASE and Fourier Features.}
Training PHASE with Fourier features and normal data leads to comparable results to PHASE alone, but requires order of magnitude less iterations. In Table \ref{tab:phase_ff_dgp} we show the results of PHASE trained with Fourier Features ($k=6$), on the surface reconstruction benchmark of \cite{williams2019deep} (\ie, point clouds with normals), with the same parameters $\mu=\lambda=10$, as before. In this case we trained the model for $10k$ iterations with batch-size $16k$. See Table \ref{tab:dgp} in the main paper for comparison with other methods. Note that PHASE and PHASE+FF are roughly equivalent according to the distance metric scores, maybe with slight advantage to PHASE without FF. However, PHASE+FF was trained for $10k$ iterations versus $100k$ iterations in the case of PHASE without FF, and it also exhibits slightly more high frequency details not captured by the metric scores, see Figure \ref{fig:dgp_methods}. 

\subsection{Learning from point clouds and Fourier features}
In this example we again worked with the surface reconstruction benchmark but explored the more challenging case of removing the normal data and using only the point clouds $\gX$. We wanted to see the bias caused by the high frequency methods and compared PHASE and IGR in the case of adding Fourier features $k=6$ to both. For PHASE we used $\lambda=10$ as before, but smaller $\mu$, \ie, $\mu=0.5$, as we found larger $\mu$ is more unstable. For IGR we did a parameter search and chose the one that provides lowest error metrics on the Gargoyle model. Figure \ref{fig:dgp_methods} shows the results of both. Note that IGR is much more sensitive to the high frequency bias and adds extraneous surface parts. In contrast, PHASE, due to its minimal perimeter property is much less susceptible to high frequency bias, although is not completely immune to this bias for point cloud data alone, as can be inspected in the image. For point cloud and normal data, we found PHASE to be even more robust to the high frequency bias, see PHASE+FF (points and normals) experiment above.

\begin{table}[t]\scriptsize	
    \centering
    \begin{tabular}{c|c|c|c|c|c} 
               \multicolumn{2}{c}{} & \multicolumn{2}{|c|}{Ground Truth}  & \multicolumn{2}{c}{Scans}\\ \hline
         Model & Method & $d_C$ & $d_H$ & $d_C^\too$ & $d_H^\too$  \\  \hline
         \multirow{2}{*}{Anchor}  
          & PHASE+FF & 0.24 & 5.30 & 0.09 & 1.17 \\
          & PHASE & \textbf{0.21} & \textbf{4.29} & 0.09 & 1.23 \\ \hline
         \multirow{2}{*}{Daratech}  
          & PHASE+FF & \textbf{0.18} & \textbf{2.53} & 0.08 & 1.79 \\
          & PHASE & \textbf{0.18} & 2.92 & 0.08 & 1.80 \\
          \hline
         \multirow{2}{*}{DC} 
          & PHASE+FF & 0.15 & 2.32 & 0.05 & 2.77 \\
          & PHASE & 0.15 & 2.52 & 0.05 & 2.78 \\
          \hline
         \multirow{2}{*}{Gargoyle}  
          & PHASE+FF & \textbf{0.16} & 3.68 & 0.06 & 0.87 \\
          & PHASE & \textbf{0.16} & \textbf{3.14} & 0.07 & 1.09 \\
          \hline
         \multirow{2}{*}{Lord Quas}  
          & PHASE+FF & 0.12 & \textbf{0.84} & 0.04 & 0.94 \\
          & PHASE  & \textbf{0.11} & 0.96 & 0.04 & 0.96 
    \end{tabular}\vspace{-5pt}
    \caption{PHASE+FF results on the surface reconstruction benchmark of \cite{williams2019deep}. See also Table \ref{tab:dgp} in main paper for more methods.}
    \label{tab:phase_ff_dgp}
\end{table}

\subsection{Parameters scan}
Table \ref{tab:tuning}, logs the \emph{entire} range of reconstruction metric errors for \emph{all} hyperparams options $\lambda,\mu\in\set{0.2,1,10}$, trained with a smaller network: an MLP with 5 layers of 256 neurons each. Note that the range intervals are rather small (except Anchor, where the worst and best results are shown in  Figure \ref{fig:room_and_anchor}), and in fact the best results further improve the state of art in some of the cases.

\begin{table}[h]\scriptsize	
    \centering
    \resizebox{\columnwidth}{!}{%
    \begin{tabular}{c|c|c|c|c|c}                
         Metric & Anchor & Daratech & DC & Gargoyle & Lord Quas \\  \hline
         $d_C$ & [0.229, 0.457] & [0.174, 0.199] & [0.143, 0.161] & [0.158, 0.172] & [0.112, 0.127] \\ \hline
         $d_H$ & [5.02, 14.8] & [2.43, 3.68] & [1.61, 2.38] & [3.23, 4.69] & [0.77, 2.32]
     \end{tabular}\vspace{-10pt}}
    \caption{Reconstruction ranges for \emph{all combinations} of params.}
    \label{tab:tuning}
\end{table}

\begin{figure}[t]
    \centering
\includegraphics[width=0.9\columnwidth]{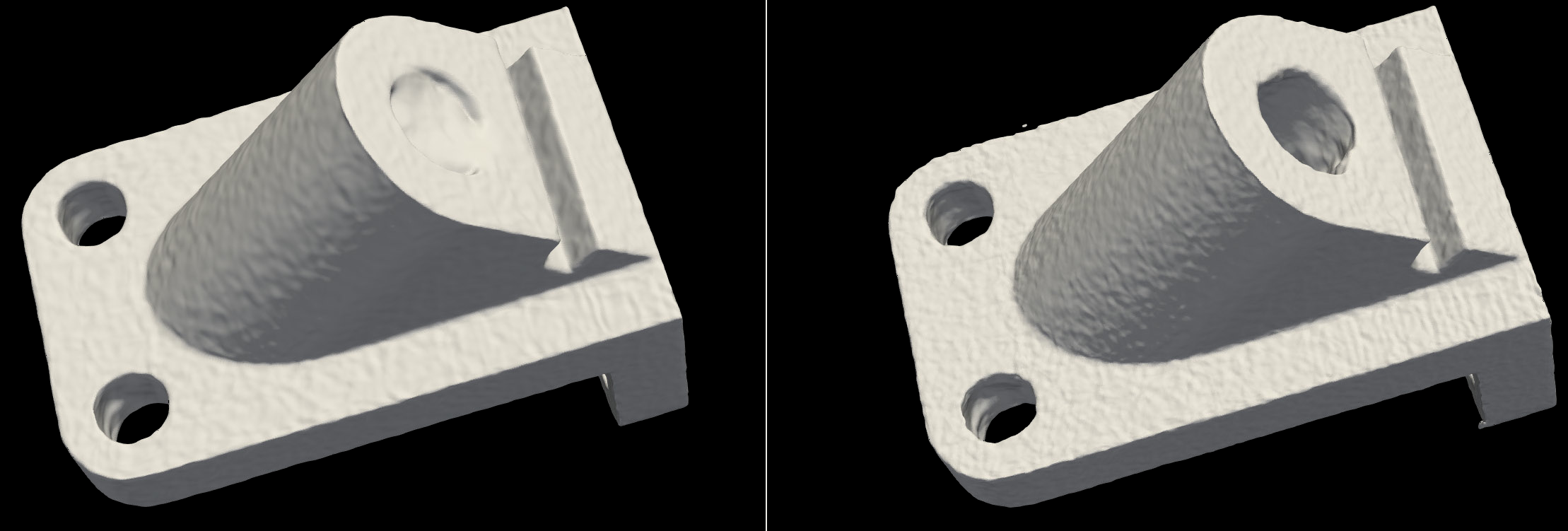}  
    \caption{The  \emph{worst} and \emph{best} examples in Table 1, Anchor.}
    \label{fig:room_and_anchor}
\end{figure}

\end{document}